\newcommand{\ccalL}{\mathcal{L}}
\newcommand{\ccalA}{\mathcal{A}}
\newcommand{\ccalH}{\mathcal{H}}
\newcommand{\ccalM}{\mathcal{M}}
\newcommand{\reals}{{\mathbb{R}}}
\newcommand{\bbh}{{\mathbf{h}}}
\newcommand{\bbx}{{\mathbf{x}}}
\newcommand{\bby}{{\mathbf{y}}}
\newcommand{\bbH}{{\mathbf{H}}}
\newcommand{\bbG}{{\mathbf{G}}}
\newcommand{\bbL}{{\mathbf{L}}}
\newcommand{\bbV}{{\mathbf{V}}}
\newcommand{\bbW}{{\mathbf{W}}}
\newcommand{\bbE}{{\mathbf{E}}}
\newcommand{\ccalO}{\mathcal{O}}
\newtheorem*{theorem*}{Theorem}
\newtheorem{definition}{\hspace{0pt}\bf Definition}
\newtheorem{assumption}{\hspace{0pt}\bf Assumption}
\newtheorem{proposition}{\hspace{0pt}\bf Proposition}
\newtheorem{lemma}{\hspace{0pt}\bf Lemma}
\newtheorem{theorem}{\hspace{0pt}\bf Theorem}
\title{Generalization of Graph Neural Networks\\ is Robust to Model Mismatch}
\author{
    Zhiyang Wang$^*$, Juan Cervi\~no$^\dagger$, Alejandro Ribeiro$^*$\\\\
    *University of Pennsylvania, USA \qquad $\dagger$ Massachusetts Institute of Technology, USA \\
 zhiyangw@seas.upenn.edu, jcervino@mit.edu, aribeiro@seas.upenn.edu
}
\begin{document}
\date{}
\maketitle
\begin{abstract}
Graph neural networks (GNNs) have demonstrated their effectiveness in various tasks supported by their generalization capabilities. However, the current analysis of GNN generalization relies on the assumption that training and testing data are independent and identically distributed (i.i.d). This imposes limitations on the cases where a model mismatch exists when generating testing data. In this paper, we examine GNNs that operate on geometric graphs generated from manifold models, explicitly focusing on scenarios where there is a mismatch between manifold models generating training and testing data. Our analysis reveals the robustness of the GNN generalization in the presence of such model mismatch. This indicates that GNNs trained on graphs generated from a manifold can still generalize well to unseen nodes and graphs generated from a mismatched manifold. We attribute this mismatch to both node feature perturbations and edge perturbations within the generated graph. Our findings indicate that the generalization gap decreases as the number of nodes grows in the training graph while increasing with larger manifold dimension as well as larger mismatch. Importantly, we observe a trade-off between the generalization of GNNs and the capability to discriminate high-frequency components when facing a model mismatch. The most important practical consequence of this analysis is to shed light on the filter design of generalizable GNNs robust to model mismatch. We verify our theoretical findings with experiments on multiple real-world datasets.

\end{abstract}

\section{Introduction}
Graph Neural Networks (GNNs) \cite{sandryhaila2013discrete,kipf2017semi, gama2019convolutional}, as a deep learning model on graphs, have been unarguably one of the most recognizable architectures when processing graph-structured data. GNNs have achieved notable performances in numerous applications, such as recommendation systems \cite{wu2022graph}, protein structure predictions \cite{yin2023coco}, and multi-agent robotic control \cite{gosrich2022coverage}. These outstanding results of GNNs depend on their 
empirical performances when \textit{predicting} over unseen testing data. This is evaluated in theory with \textit{statistical generalization analysis}, which quantifies the difference between the \textit{empirical risk} (i.e. training error) and the \textit{statistical risk }(i.e. testing error) in deep learning theory \cite{kawaguchi2022generalization}. 
\begin{figure*}[t]
    \centering
    \begin{subfigure}[b]{0.3\textwidth}
        \centering
        \includegraphics[trim=30 30 30 30, clip, width=\linewidth]{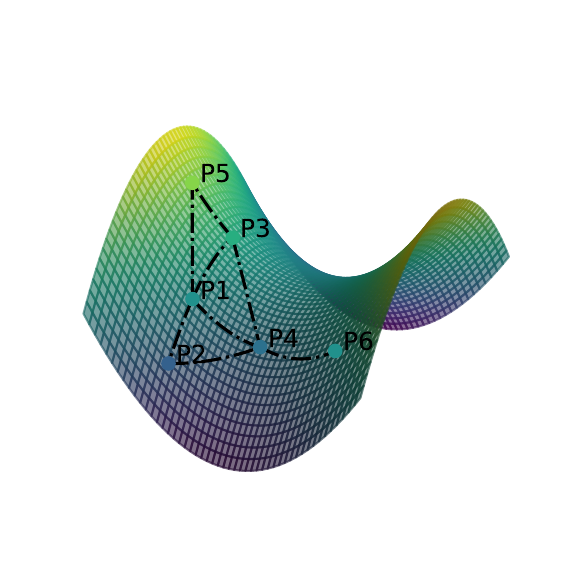}
        \caption{Manifold $\ccalM$}
        \label{fig:manifold_clean}
    \end{subfigure}
    \hfill
    \begin{subfigure}[b]{0.3\textwidth}
        \centering
        \includegraphics[trim=30 30 30 30, clip, width=\linewidth]{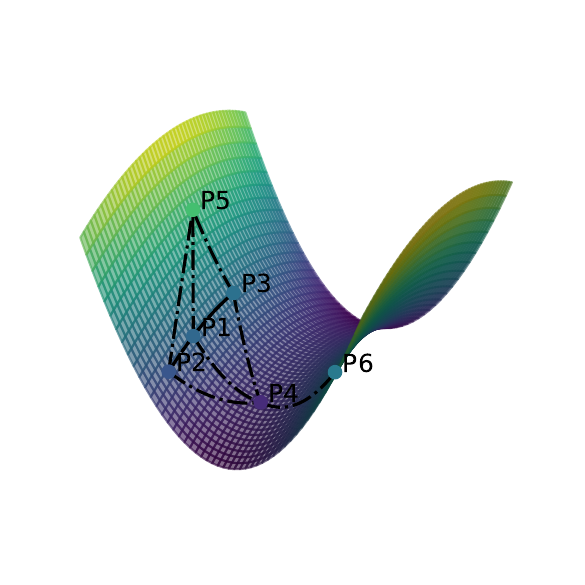}
        \caption{Mismatched Manifold $\ccalM'$}
        \label{fig:manifold_perturbed}
    \end{subfigure}
        \hfill
    \begin{subfigure}[b]{0.3\textwidth}
        \centering
        \includegraphics[trim=30 30 30 30,  clip, width=\linewidth]{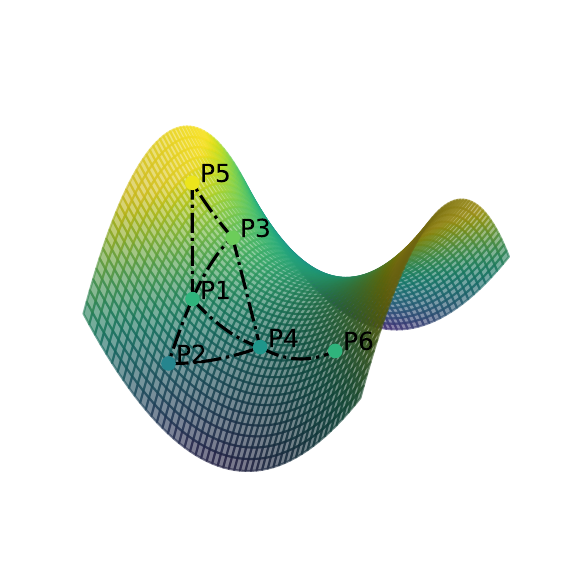}
        \caption{Perturbed manifold function $f'$}
        \label{fig:manifold_node}
    \end{subfigure}
    \caption{Example of model mismatch. (a) The original manifold with a generated graph based on sampled points ($P1,\cdots, P6$). (b) The mismatched manifold with the sampled points also shifted, resulting in a perturbed graph. (c) The manifold mismatch can be seen as the perturbation of manifold function values, which leads to perturbed node features on the generated graph. 
    }
    \label{fig:manifold}
\end{figure*}
Recent works have focused on proving the generalization bounds of GNNs without any dependence on the underlying model responsible for generating the graph data \cite{scarselli2018vapnik, garg2020generalization, verma2019stability}. Generalization analysis on graph classification is studied in a series of works when graphs are drawn from random limit models \cite{ruiz2023transferability, maskey2022generalization, maskey2024generalization, levie2024graphon}. In \cite{wang2024manifold}, the authors study the generalization of GNNs over graphs generated from an underlying manifold on both node and graph levels. These works assume that the training and testing graphs are generated from the same underlying model. In practice, there are inevitable scenarios with generative model mismatch between testing and training graphs \cite{li2022ood}.  Hence, it is of crucial to demonstrate the generalization ability of GNNs remains robust to generative model mismatch. This would provide a promising assurance that GNNs can maintain outstanding generalizable performance even in noisy environments.

The model mismatch may stem from disturbances during the graph generation process from the manifold. Moreover, the underlying manifold model is prone to undergo alternations and fluctuations in practical situations. While we take into account the underlying model mismatches, they can be interpreted as perturbations within the generated graph domain. Figure 1a displays the original manifold and a graph derived from it. Figure \ref{fig:manifold_perturbed} illustrates a mismatched manifold, which leads to edge perturbations in the generated graph. Figure \ref{fig:manifold_node} shows the interpretation of perturbed manifold function values, resulting in node feature perturbations in the generated graph.

We prove that the generalization of GNNs is robust to model mismatches based on the convergence of GNNs on generated graphs to neural networks on the manifold model \cite{wang2023geometric}, combined with the stability of the Manifold Neural Networks (MNNs) under manifold deformations \cite{wang2024stability}. Implementing low-pass and integral Lipschitz continuous filters (Definition \ref{def:low-pass}) allows us to bound the generalization gap under model mismatches. This bound decreases with the number of nodes in the training graph and increases with both the mismatch size and the manifold dimension. The key insight from the bound is that a more robust generalization necessitates the cost of failing to discriminate high spectral components in the graph data. 

Our main contributions are as follows:
\begin{enumerate}
    \item We analyze the generalization of the GNNs when there exists a manifold model mismatch between the training data and testing data.
    \item  We determine the manifold mismatch as perturbations on the generated graphs, both as node feature perturbations and edge perturbations.
    \item We propose that implementing continuity restrictions on the filters composing the GNN ensures the robust generalization in both node and graph classification tasks.
    \item We observe a trade-off between robust generalization and discriminability through the generalization bound.
\end{enumerate}
We conduct experiments on real-world datasets to validate our theoretical findings.

\section{Related Works}

\subsection{In-Distribution Generalization of GNNs}
Existing works on the in-distribution generalization of GNNs fall in node and graph level tasks. For node classification tasks of GNNs, there are works providing a generalization bound of GNNs based on a Vapnik-Chervonenkis dimension \cite{scarselli2018vapnik}, algorithmic stability analysis \cite{verma2019stability, zhou2021generalization}, PAC-Bayesian analysis \cite{ma2021subgroup} and Rademacher complexity \cite{esser2021learning}. For graph classification tasks of GNNs, the authors prove the generalization bound via Rademacher complexity \cite{garg2020generalization} and PAC-Bayes analysis \cite{liao2020pac, ju2023generalization}. The authors consider a continuous graph limit model to analyze the generalization of GNNs on graph classification
in \cite{maskey2022generalization, maskey2024generalization,levie2024graphon}. In \cite{wangcervino2024}, the authors prove the generalization of GNNs on graphs sampled from a manifold both for node and graph classification tasks. These works are considered only in the in-distribution case where the training and testing data are sampled from the same distribution.

\subsection{Out-of-Distribution Generalization of GNNs}
Several works have extensively addressed the out-of-distribution generalization of GNNs with graph enhancement methods. 
The authors in \cite{tian2024graphs} propose a domain generalization framework for node-level tasks on graphs to address distribution shifts in node attribute distribution and graphic topology. In \cite{fan2023generalizing}, the authors study the out-of-distribution generalization of GNNs on graph-level tasks with a causal representation learning framework. In \cite{li2022ood} the authors handle graph distribution shifts in complex
and heterogeneous situations of GNNs with a nonlinear graph representation decorrelation method. The authors in \cite{yehudai2021local} propose a size generalization analysis of GNNs correlated to the discrepancy between local distributions of graphs. In our novel approach, we conceptualize the generative model mismatch as a distribution shift. Our findings can be further generalized into a theoretical framework for GNNs in manifold domain shift scenarios. In such cases, the generalization gap is directly proportional to the distance between the manifold models. This extension is discussed in more detail within the supplementary material.
\subsection{GNNs and Manifold Neural Networks}
Geometric deep learning has been proposed in \cite{bronstein2017geometric} with neural network architectures on manifolds. The authors in \cite{monti2017geometric} and \cite{chakraborty2020manifoldnet} provide neural network architectures for manifold-valued data.  In \cite{wang2024stability} and \cite{wang2022convolutional}, the authors define convolutional operation over manifolds and see the manifold convolution as a generalization of graph convolution, which establishes the limit of neural networks on large-scale graphs as manifold neural networks (MNNs). The authors in \cite{wang2023geometric} further establish the relationship between GNNs and MNNs with non-asymptotic convergence results for different graph constructions.

\section{Neural Networks on Manifolds}
Suppose there is a $d$-dimensional embedded Riemannian manifold $\ccalM\subset \reals^\mathsf{M}$ which is compact, smooth and differentiable. 
A measure $\mu$ over $\ccalM$ with density function $\rho:\ccalM\rightarrow(0,\infty)$, which is assumed to be bounded as $0<\rho_{min}\leq \rho(x) \leq \rho_{max}<\infty$ for each $x\in \ccalM$. Data supported over the manifold is defined as a scalar function $f:\ccalM \rightarrow \reals$ \cite{wang2024stability} mapping datum value $f(x)$ to each point $x\in \ccalM$. The manifold with density $\rho$ is endowed with a weighted Laplace operator \cite{grigor2006heat}, which generalizes the Laplace-Beltrami operator as
\begin{equation}
    \label{eqn:weight-Laplace}
    \ccalL f = -\frac{1}{2\rho} \text{div}(\rho^2 \nabla f),
\end{equation}
where $\text{div}$ denotes the divergence operator of $\ccalM$ and $\nabla$ denotes the gradient operator of $\ccalM$ \cite{bronstein2017geometric}. 

We consider square-integrable functions over $\ccalM$, denoted as use $L^2(\ccalM)$. The inner product of functions $f, g\in L^2(\ccalM)$ is defined as 
\begin{equation}
\label{eqn:innerproduct}
    \langle f,g \rangle_{\ccalM}=\int_{\ccalM} f(x)g(x) \text{d}\mu(x), 
\end{equation}
while the $L^2$ norm is defined as $\|f\|^2_{\ccalM } = \langle f, f\rangle_{\ccalM }$.

Manifold convolution is defined by aggregating the heat diffusion process over $\ccalM $ with operator $\ccalL $ \cite{wang2022convolutional, wang2024stability}. With the input manifold function $f\in L^2(\ccalM )$, the manifold convolution output is 
\begin{align} 
\label{eqn:manifold-convolution}
   g(x) = \bbh(\ccalL )f(x) =\sum_{k=0}^{K-1} h_k e^{-k\ccalL }f(x) \text{.}
\end{align}

Considering the frequency representation, the Laplace operator $\ccalL $ has real, positive and discrete eigenvalues $\{\lambda_i\}_{i=1}^\infty$ as the Laplace operator is self-adjoint and positive-semidefinite. The eigenfunction associated with each eigenvalue is denoted as $\bm\phi_i$, i.e. $\ccalL  \bm\phi_i =\lambda_i \bm\phi_i$. The eigenvalues are ordered as $0=\lambda_1\leq \lambda_2\leq \lambda_3\leq \hdots$, and the eigenfunctions are orthonormal and form the eigenbasis of $L^2(\ccalM )$. When mapping a manifold function onto one of the eigenbasis $\bm\phi_i$, we have the spectral component as $[\hat{f} ]_i=\langle f, \bm\phi_i\rangle_{\ccalM }$. The manifold convolution can be written in the spectral domain point-wisely as 
\begin{align}
     [\hat{g} ]_i = \sum_{k=0}^{K-1} h_k e^{-k\lambda_i}   [\hat{f} ]_i\text{.}
\end{align}
Hence, the frequency response of manifold filter is given by $\hat{h}(\lambda)=\sum_{k=0}^{K-1} h_k e^{-k\lambda}$, depending only on the filter coefficients $h_k$ and eigenvalues of $\ccalL $ when $\lambda=\lambda_i$.

Manifold neural networks (MNNs) are built by cascading layers consisting of a bank of manifold filters and a pointwise nonlinearity function $\sigma:\reals \rightarrow\reals$, with the output of each layer written as 
\begin{equation}\label{eqn:mnn}
f_l(x) = \sigma\Big( \bbh_l (\ccalL ) f_{l-1}(x)\Big).
\end{equation}
To represent the MNN succinctly, we group all learnable parameters, and denote the mapping based on input manifold function $f\in L^2(\ccalM )$ to predict target function $g \in L^2(\ccalM )$ as $\bm\Phi(\bbH,\ccalL ,f)$, where $\bbH\in\ccalH\subset\reals^P$  is a filter parameter set of the manifold filters. A positive loss function is denoted as $\ell(\bm\Phi(\bbH, \ccalL, f), g)$ to measure the estimation performance.

\section{Geometric Graph Neural Networks}
Suppose we can access a discrete set of sampled points over manifold $\ccalM$. A graph $\bbG$ is generated based on a set of $N$ i.i.d. randomly sampled points $X_N=\{x_N^1, x_N^2,\cdots, x_N^{N}\}$ according to measure $\mu$ over $\ccalM$. Seeing these $N$ sampled points as nodes, edges connect every pair of nodes $(x_N^i, x_N^j)$ is connected with weight value $[\bbW_N]_{ij}, \bbW_N\in \reals^{N \times N }$ determined by a function of their Euclidean distance $\|x_N^i - x_N^j\|$ \cite{calder2022improved}, explicitly written as 
\begin{equation}
\label{eqn:edge-weight}
[\bbW_N]_{ij} = \frac{\alpha_d}{(d+2)N \epsilon^{d+2} } \mathbbm{1}_{[0,1]}\left( \frac{\|x_N^i - x_N^j\|}{\epsilon} \right),
\end{equation}
where $\alpha_d$ is the volume of the $d$-dimensional Euclidean unit ball and $\mathbbm{1}$ represents an indicator function. Based on this, the graph Laplacian can be calculated as $\bbL_N = \text{diag}(\bbW_N \mathbf{1}) - \bbW_N$. On this generated graph, graph data values are sampled from the functions over manifold $\ccalM$ \cite{wang2022convolutional, chew2023convergence}. Consider the input and target functions $f,g\in L^2(\ccalM)$, the sampled input and target functions $\bbx, \bby \in L^2(X_N)$ over graph $\bbG$  can be written  as 
\begin{align}
\label{eqn:sample}
    [\bbx]_i = f(x_N^i),\quad [\bby ]_i = g(x_N^i).
\end{align}

A convolutional filter on graph $\bbG$ can be extended from manifold convolution defined in \eqref{eqn:manifold-convolution} by replacing the Laplace operator with the graph Laplacian as 
\begin{equation}
    \label{eqn:graph-convolution}
    \bby = \bbh(\bbL )\bbx = \sum_{k=0}^{K-1} h_k e^{-k\bbL} \bbx.
\end{equation}
We observe that this formation is accordant with the definition of the graph convolution \cite{gama2019convolutional} with graph shift operator as $e^{-\bbL}$.
Replace $\bbL$ with eigendecomposition $\bbL = \bbV \bm\Lambda \bbV^H$, where $\bbV$ is the eigenvector matrix and $\bm\Lambda$ is a diagonal matrix with eigenvalues of $\bbL$ as the entries. The spectral representation of this filter on $\bbG$ is 
\begin{equation}
    \label{eqn:graph_convolution_spectral}
    \bbV^H \bbh(\bbL_\tau) \bbx =  \sum_{k=1}^{K-1} h_k e^{-\Lambda k} \bbV^H \bbx = \hat{h}(\bm\Lambda)\bbV^H \bbx.
\end{equation}
This analogously leads to a frequency response of this graph convolution, i.e.   $\hat{h}(\lambda)= \sum_{k=0}^{K-1} h_k \lambda^k$, relating the input and output spectral components point-wisely.

Neural networks on $\bbG$ is composed of layers consisting of graph filters and point-wise nonlinearity $\sigma$, written as 
\begin{equation}
    \label{eqn:gnn}
    \bbx_l = \sigma\Big(\bbh_l(\bbL)\bbx_{l-1}\Big).
\end{equation}
The mapping from input graph data $\bbx\in L^2(X_N)$ to predict $\bby \in L^2(X_N)$ is denoted as $\bm\Phi(\bbH,\bbL,\bbx)$ with $\bbH\in \ccalH\subset{\reals^P}$ which is trained to minimize the loss $\ell(\bm\Phi(\bbH,\bbL,\bbx), \bby)$.

\section{Generalization of GNNs to model mismatch}

\label{sec:generalization}
\subsection{Manifold Model Mismatch}
A mismatched model of manifold $\ccalM$ is denoted as $\ccalM^\tau$ where $\tau$ maps each point $x\in\ccalM$ to a displaced $\tau(x)\in \ccalM^\tau$. We restrict this mismatch function class $\tau$ as it preserves the properties of $\ccalM$ and denote the curvature distance between $x$ and the displaced $\tau(x)$ as $\text{dist}(x,\tau(x))$. The mismatch $\tau$ induces a tangent map $\tau_{*,x}: T_x\ccalM\rightarrow T_{\tau(x)}\ccalM$ {which is a linear map between the tangent spaces \cite{loring2011introduction}.} With the coordinate description over $\ccalM$, the tangent map $\tau_{*,x}$ can be exactly represented by the Jacobian matrix $J_x(\tau)$. 

The manifold model mismatch can be seen as deformations to input manifold functions as shown in Figure \ref{fig:manifold_node}.
\begin{equation}
\label{eqn:perturb-node}
    \ccalL f(\tau(x)) = \ccalL f'(x), \qquad x\in \ccalM.
\end{equation}
While the model mismatch can also be understood as deformations to the Laplacian operator as shown in Figure \ref{fig:manifold_perturbed}.
\begin{equation}
\label{eqn:perturb-edge}
    \ccalL f(\tau(x)) = \ccalL_\tau f(x), \qquad x\in \ccalM.
\end{equation}

\subsection{Generalization of GNNs on node-level}
The generalization analysis is restricted to a finite-dimensional subset of $L^2(\ccalM)$, i.e. the functions over the manifold are bandlimited as defined in Definition \ref{def:band}.
\begin{definition}
\label{def:band}
      Function $f\in L^2(\ccalM)$ is bandlimited if there exists some $\lambda>0$ such that for all eigenpairs $\{\lambda_i, \bm\phi_i\}_{i=1}^\infty$ of the weighted Laplacian $\ccalL $ when $\lambda_i>\lambda$, we have $\langle f, \bm\phi_i \rangle_{\ccalM} = 0$.
\end{definition}

\begin{assumption}(Lipschitz target function)\label{ass:loss}
 The target function $g$ is Lipschitz continuous, i.e., $|g(x)-g(y)|\leq C_g\text{dist}(x,y)$, for all  $x,y\in\ccalM$.
\end{assumption}

We further assume that the filters in MNN $\bm\Phi(\bbH,\ccalL , \cdot)$ and GNN $\bm\Phi(\bbH,\bbL ,\cdot)$ are low-pass and integral Lipschitz filters as defined in Definition \ref{def:low-pass}. We note that this is a mild assumption as high frequency components on the graph/manifold implies that there exist functions with large variations in adjacent entries. This naturally generates instabilities that are more difficult to learn.

\begin{definition}\label{def:low-pass}
A filter is a low-pass and integral Lipschitz filter if its frequency response satisfies 
\begin{align}
    \label{eqn:low-pass}
       & \left|\hat{h}(\lambda)\right| =\ccalO\left(\lambda^{-d}\right),\quad \lambda \rightarrow \infty,\\
       & \left|\hat{h}'(\lambda)\right| \leq C_L \lambda^{-d-1},\quad \lambda\in(0,\infty).
\end{align}
    with $d$ denoted as the dimension of manifold $\ccalM$ and $C_L$ a positive continuity constant.
\end{definition}
We note that with a smaller $C_L$, the filter function tends to be smoother especially in the high-spectrum domain. The filters fail to discriminate different high spectral components with similar frequency responses given to them.

In the neural network architectures, we consider assumptions on both the nonlinear activation function and the loss function as presented in Assumption \ref{ass:activation} and \ref{ass:loss} respectively. We note that these are reasonable assumptions as most activations (e.g. ReLU, modulus, sigmoid) and loss functions (e.g. L1 regression, Huber loss, quantile loss).

\begin{assumption}(Normalized Lipschitz activation functions)\label{ass:activation}
 The activation function $\sigma$ is normalized Lipschitz continuous, i.e., $|\sigma(a)-\sigma(b)|\leq |a-b|$, with $\sigma(0)=0$.
\end{assumption}

\begin{assumption}(Normalized Lipschitz loss function)\label{ass:loss}
 The loss function $\ell$ is normalized Lipschitz continuous, i.e., $|\ell(y_i,y)-\ell(y_j,y)|\leq |y_i-y_j|$, with $\ell(y,y)=0$.
\end{assumption}

The generalization gap is evaluated between the \emph{empirical risk} over the discrete graph model and the \emph{statistical risk} over the manifold model, which has been studied in both node-level and graph level in previous works when no model mismatch is considered \cite{wangcervnino2024neurips, wangcervino2024}. 

Suppose the training graph $\bbG$ is generated from a manifold $\ccalM$. The empirical risk that we train the GNN to minimize is therefore defined as 
\begin{align}
    \label{eqn:empirical-graphloss-node}
    R_\bbG(\bbH) =  \frac{1}{N}\sum_{i=1}^N\ell \left(  [\bm\Phi(\bbH, \bbL, \bbx )]_i  ,  [\bby ]_i \right).
\end{align}

The statistical risk over mismatched manifold $\ccalM_\tau$ is 
\begin{align}
    \label{eqn:statistical-manifoldloss-node}
    R_{\ccalM^\tau}(\bbH) = \int_{\ccalM^\tau} \ell \left( \bm\Phi(\bbH, \ccalL_\tau, f)(x), g(x)\right) \text{d}\mu_\tau(x).
\end{align}

The generalization gap under model mismatch is 
\begin{align}
    \label{eqn:generalization-gap}
    GA_\tau  =  \sup_{\bbH \in \ccalH } \left| R_{\ccalM^\tau}(\bbH) - R_\bbG(\bbH)\right|.
\end{align}

\begin{theorem}
\label{thm:generalization-node}
    Suppose a neural network is equipped with filters defined in Definition \ref{def:low-pass} and normalized nonlinearites (Assumption \ref{ass:activation}). The neural network is operated on a graph $\bbG$ generated according to \eqref{eqn:edge-weight} from $\ccalM$ and a mismatched manifold $\ccalM^\tau$ with a bandlimited (Definition \ref{def:band}) input manifold function. Suppose the mismatch $\tau :\ccalM\rightarrow \ccalM$ satisfies $\text{dist}(x,\tau(x))\leq \gamma$ and $\|J_x(\tau)-I\|_F\leq \gamma$ for all $x\in\ccalM$. The generalization of neural network trained on $\bbG$ to minimize a normalized Lipschitz loss function (Assumption \ref{ass:loss}) holds in probability at least $1-\delta$ that 
    \begin{equation}
    \label{eqn:ga-node}
        GA_\tau \leq  C_1\frac{\epsilon }{\sqrt{N}}
     + C_2 \frac{\sqrt{\log(1/\delta)}}{N}   +    C_3 \left(\frac{\log N}{N}\right)^{\frac{1}{d}} + C_4 \gamma,
    \end{equation}
    with $\epsilon \sim \left(\frac{\log(C/\delta)}{N} \right)^{\frac{1}{d+4}}$, $C_1$ scaling with $C_L$, $C_4$ scaling with $C_L$ and  $C_g$. $C_1$ and $C_3$ depend on the geometry of $\ccalM$.   

\end{theorem}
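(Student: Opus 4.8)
The plan is to prove the bound by a triangle-inequality decomposition that routes the empirical graph risk to the mismatched statistical risk through the statistical risk on the clean manifold $\ccalM$. Writing $R_\ccalM(\bbH) = \int_\ccalM \ell(\bm\Phi(\bbH,\ccalL,f)(x), g(x))\,\mathrm{d}\mu(x)$ for this intermediate quantity, I would bound
\[
 GA_\tau \le \sup_{\bbH\in\ccalH}\left| R_{\ccalM^\tau}(\bbH) - R_\ccalM(\bbH)\right| + \sup_{\bbH\in\ccalH}\left| R_\ccalM(\bbH) - R_\bbG(\bbH)\right|.
\]
The first term isolates the effect of the model mismatch $\tau$ and should produce the $C_4\gamma$ contribution; the second is the in-distribution generalization gap already controlled in \cite{wangcervino2024, wangcervnino2024neurips} and should produce the three $N$-dependent terms.

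For the mismatch term, I would use \eqref{eqn:perturb-edge} to reinterpret $\tau$ as a deformation of the Laplacian, $\ccalL_\tau$, so that the difference $\bm\Phi(\bbH,\ccalL_\tau,f) - \bm\Phi(\bbH,\ccalL,f)$ is exactly the quantity controlled by the MNN stability theorem of \cite{wang2024stability}. Since the filters are low-pass and integral Lipschitz (Definition \ref{def:low-pass}) and the activations are normalized Lipschitz (Assumption \ref{ass:activation}), that theorem yields $\|\bm\Phi(\bbH,\ccalL_\tau,f) - \bm\Phi(\bbH,\ccalL,f)\|_\ccalM \le C\gamma$ with $C$ scaling with $C_L$, uniformly in $\bbH$, from the hypotheses $\mathrm{dist}(x,\tau(x))\le\gamma$ and $\|J_x(\tau)-I\|_F\le\gamma$. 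The normalized Lipschitz loss (Assumption \ref{ass:loss}) converts this output gap into a loss gap; the remaining discrepancy — the target evaluated on $\ccalM^\tau$ versus $\ccalM$ and the change of measure $\mu_\tau$ versus $\mu$ — is absorbed by the Lipschitz target assumption via $|g(\tau(x))-g(x)|\le C_g\,\mathrm{dist}(x,\tau(x))\le C_g\gamma$, giving $C_4$ scaling with both $C_L$ and $C_g$.

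For the in-distribution term I would insert a second bridge, the empirical average $\tfrac{1}{N}\sum_i \ell(\bm\Phi(\bbH,\ccalL,f)(x_N^i), g(x_N^i))$ of the MNN loss at the sampled nodes. The gap between this average and $R_\bbG(\bbH)$ is controlled by the non-asymptotic convergence of the GNN on $\bbG$ to the MNN on $\ccalM$ \cite{wang2023geometric}, again converted through the normalized Lipschitz loss; choosing the kernel bandwidth as $\epsilon\sim(\log(C/\delta)/N)^{1/(d+4)}$ balances the bias and variance of the Laplacian estimator and yields the $C_1\epsilon/\sqrt N$ term, with $C_1$ scaling with $C_L$ and depending on the geometry of $\ccalM$. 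The gap between the empirical average and the integral $R_\ccalM(\bbH)$ is a uniform concentration statement over the hypothesis class: a covering-number argument on $\ccalH$ together with a Hoeffding/McDiarmid bound and a union bound produces the $C_2\sqrt{\log(1/\delta)}/N$ term, while the geometric cost of discretizing a $d$-dimensional manifold by $N$ random samples gives the $C_3(\log N/N)^{1/d}$ term, with $C_3$ depending on the geometry of $\ccalM$.

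The main obstacle is the uniform control of the GNN-to-MNN convergence under the integral Lipschitz constraint. Spectral convergence of the graph Laplacian to $\ccalL$ degrades at large eigenvalues, so a naive filter would not converge; the role of the conditions $|\hat h(\lambda)| = \ccalO(\lambda^{-d})$ and $|\hat h'(\lambda)|\le C_L\lambda^{-d-1}$ is precisely to suppress and smooth the contribution of the high-frequency eigenpairs whose estimates are unreliable, which is also the source of the discriminability trade-off highlighted in the statement. Making the convergence uniform over $\bbH\in\ccalH$ (needed because $GA_\tau$ in \eqref{eqn:generalization-gap} is a supremum) requires that $\bbH\mapsto\bm\Phi(\bbH,\cdot,\cdot)$ be Lipschitz in its parameters and that the covering number of $\ccalH$ enter the failure probability correctly; reconciling the $\epsilon$-dependent spectral rate with this covering argument, so that a single choice of $\epsilon$ and a single $1-\delta$ event serve all three $N$-dependent terms simultaneously, is the delicate bookkeeping at the heart of the proof.
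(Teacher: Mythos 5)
Your route is essentially the paper's own proof: the same triangle-inequality decomposition through the clean statistical risk $R_\ccalM(\bbH)$, with the three $N$-dependent terms coming from the in-distribution bound of \cite{wangcervnino2024neurips} and the $C_4\gamma$ term coming from rewriting $\tau$ as a Laplacian deformation $\ccalL'$ and invoking the MNN stability machinery of \cite{wang2024stability} (eigenvalue perturbation, Davis--Kahan, and Weyl's law $\lambda_i\sim i^{2/d}$ to make $\sum_i \lambda_i^{-d-1}$ summable under the integral Lipschitz condition), exactly as you describe. One minor correction: the change of measure $J_x(\tau)\,\mathrm{d}\mu$ is not absorbed by the Lipschitz target assumption but by the boundedness of the loss, contributing an $\ell_{max}\gamma$ term to $C_4$ alongside the $C_g$ and $C_L$-scaled contributions; the Lipschitz target handles only the $g(\tau(x))$ versus $g(x)$ discrepancy.
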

\begin{proof}
    See supplementary material for proof and the definitions of $C_1$, $C_2$, $C_3$ and $C_4$.
\end{proof}
\subsubsection{Remark 1} This conclusion is ready to extend to multi-layer and multi-feature neural network architectures, as the neural network is cascaded by layers of filters and nonlinearities. The generalization error propagates across layers, leading to an increase in the generalization gap of multi-layer and multi-feature GNNs with the size of the architecture, which we will further verify in simulations.

Theorem \ref{thm:generalization-node} indicates that the generalization of GNNs is robust to model mismatches, with the generalization gap decreasing with the number of nodes $N$ in the training graph. As more points are sampled from the manifold, the generated graph can better approximate the underlying manifold. This leads to a better generalization of the GNN to predict the unseen points over the manifold. The upper bound also increases with the dimension of the underlying manifold $d$, as higher dimension indicates higher model complexity. Specifically, the generalization gap increases with the mismatch size $\gamma$ as the testing manifold is shifted more from the original manifold. 
\subsubsection{Remark 2}It is important to note that there is a trade-off involved in designing GNNs: achieving better generalization often comes at the cost of reduced discriminability. Specifically, using a smaller $C_L$ leads to improved generalization and robustness by reducing generalization error. While this leads to smoother filter functions, which limits the GNN ability to discriminate between different spectral components. This reduced discriminability can negatively impact prediction performance. In essence, we can enhance better and more robust generalization capabilities of GNNs, but this improvement necessitates a compromise in their discriminative power.

\subsection{Extension to Graph Classification}
The generalization ability of GNN can be extended from the node level to the graph level, which indicates the manifold classification with approximated graph classification. 

Suppose we have manifolds $\{\ccalM^{\tau_k}_k\}_{k=1}^K$ with dimension $d_k$ under a mismatch $\tau_k$. Manifold $\ccalM^{\tau_k}_k$ is labeled with $y_k \in \reals$. The manifolds are smooth, compact, differentiable, and embedded in $\reals^\mathsf{M}$ with measure $\mu_{\tau_k,k}$. Each manifold $\ccalM^{\tau_k}_k$ is equipped with a weighted Laplace operator $\ccalL_{\tau_k,k}$. Assume that we can access $N_k$ randomly sampled points according to $\mu_k$ over each  manifold $\ccalM_k$. Graphs generated based on these sampled points are denoted as $\{\bbG_k\}_{k=1}^K$ with graph Laplacians $\{\bbL_k\}_{k=1}^K$. A GNN $\bm\Phi(\bbH,\bbL_{\cdot},\bbx_\cdot)$ is trained on these graphs with $\bbx_k$ denoted as the input data on graphs sampled from the data on manifolds $f_k\in L^2(\ccalM_k)$. The output of the GNN is set as the average of the output values over all the nodes while the output of MNN $\bm\Phi(\bbH,\ccalL_{\tau_\cdot,\cdot},f_\cdot)$ is the averaged value over the mismatched manifold. Loss function $\ell$ evaluates the performance of GNN and MNN by comparing the difference between the output label and the target label. The empirical risk that the GNN is trained to minimize is defined as 
\begin{align}
    \label{eqn:empirical-graphloss-graph}
    R_\bbG(\bbH) =  \sum_{k=1}^K \ell\left( \frac{1}{N_k}\sum_{i=1}^{N_k}  [\bm\Phi(\bbH, \bbL_{k}, \bbx_k)]_i  ,  y_k \right).
\end{align}

The statistical risk is defined based on the statistical MNN output and the target label over mismatched models as 
\begin{align}
    \label{eqn:statistical-manifoldloss-graph}
    R_{\ccalM^\tau}(\bbH) = \sum_{k=1}^K\ell \left( \int_{\ccalM_k} \bm\Phi(\bbH, \ccalL_{\tau_k,k}, f_k)(x) \text{d}\mu_{\tau_k}(x), y_k\right).
\end{align}

The generalization gap is therefore
\begin{align}
    \label{eqn:generalization-gap-mani}
    GA_\tau = \sup_{\bbH\in \ccalH} \left| R_{\ccalM^\tau}(\bbH) -  R_\bbG(\bbH)\right|.
\end{align}
\begin{theorem}
\label{thm:generalization-graph}
    Suppose the GNN and MNN with filters defined in Definition \ref{def:low-pass} and the input manifold functions are bandlimited (Definition \ref{def:band}). Suppose the mismatches $\tau_k :\ccalM_k\rightarrow \ccalM_k$ where $\text{dist}(x,\tau_k(x))\leq \gamma$ and $\|J_x(\tau_k)-I\|_F\leq \gamma$ for all $x\in\ccalM_k$ for $k=1,2,\cdots K$. Under Assumptions \ref{ass:activation} and \ref{ass:loss} it holds in probability at least $1-\delta$ that
    \begin{align}
   & \nonumber GA_\tau \leq  \sum_{k=1}^K \left(\frac{C_1}{\sqrt{N_k}} \epsilon_k 
     + C_2 \frac{\sqrt{\log(1/\delta)}}{N_k} \right) +    C_3 \sum
_{k=1}^K \left(\frac{\log N_k}{N_k}\right)^{\frac{1}{d_k}} + KC_4 \gamma,
    \end{align}
     with $\epsilon_k \sim \left(\frac{\log(C/\delta)}{N_k} \right)^{\frac{1}{d_k+4}}$, $C_1$ and $C_4$ scaling with $C_L$. $C_1$ and $C_3$ depend on the geometry of $\ccalM$. 
\end{theorem}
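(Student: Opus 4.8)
The plan is to reduce the graph-level statement to the node-level bound already proved in Theorem~\ref{thm:generalization-node}. Observe that the graph-classification setup is essentially $K$ independent copies of the node-level construction, tied together by the outer loss $\ell$ and the averaging (pooling) operation over nodes. So the first step is to control, for each $k$, the difference between the pooled GNN output $\frac{1}{N_k}\sum_{i=1}^{N_k}[\bm\Phi(\bbH,\bbL_k,\bbx_k)]_i$ and the integrated MNN output $\int_{\ccalM_k}\bm\Phi(\bbH,\ccalL_{\tau_k,k},f_k)(x)\,\text{d}\mu_{\tau_k}(x)$. The averaging over nodes is a Monte Carlo approximation of the integral over the manifold, so this difference splits into (i) a \emph{convergence} term from discretizing the manifold by $N_k$ sampled points, using the convergence of GNNs to MNNs from \cite{wang2023geometric}, and (ii) a \emph{stability} term measuring the effect of the mismatch $\tau_k$, using the stability of MNNs under manifold deformations from \cite{wang2024stability}. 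This is exactly the decomposition underlying Theorem~\ref{thm:generalization-node}, so the single-manifold error contributes the same four-term structure with $N$ replaced by $N_k$ and $d$ by $d_k$.

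The second step is to carry this through the outer loss and the sum over $k$. Since $\ell$ is normalized Lipschitz (Assumption~\ref{ass:loss}), the per-class discrepancy in the loss is bounded by the per-class discrepancy in the pooled outputs, and the triangle inequality over the $K$ summands lets me add up the single-manifold bounds. I would first write
\begin{align}
    \left| R_{\ccalM^\tau}(\bbH) - R_\bbG(\bbH)\right| \leq \sum_{k=1}^K \left| \ell\left( \tfrac{1}{N_k}\sum_{i=1}^{N_k}[\bm\Phi(\bbH,\bbL_k,\bbx_k)]_i, y_k\right) - \ell\left( \int_{\ccalM_k}\bm\Phi(\bbH,\ccalL_{\tau_k,k},f_k)(x)\,\text{d}\mu_{\tau_k}(x), y_k\right)\right|,
\end{align}
and then bound each summand by the per-class output discrepancy via the Lipschitz property. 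Taking the supremum over $\bbH\in\ccalH$ inside the sum, each term is controlled by the node-level analysis, which explains why the $C_1,C_2,C_3$ contributions appear summed over $k$ and why the mismatch term becomes $KC_4\gamma$: each of the $K$ manifolds contributes its own $C_4\gamma$ stability penalty.

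The third step is the probabilistic bookkeeping. The node-level bound holds with probability at least $1-\delta$ for a single graph; here I have $K$ graphs, each sampled independently. I would either apply the single-graph concentration bound at confidence $1-\delta/K$ per manifold and union-bound, absorbing the resulting $\log(K/\delta)$ factor into the constants, or argue that $\delta$ as stated already reflects the joint event so that the stated $\epsilon_k \sim (\log(C/\delta)/N_k)^{1/(d_k+4)}$ and the $\sqrt{\log(1/\delta)}/N_k$ term carry the dependence. The main obstacle I anticipate is making the pooling-versus-integration step fully rigorous: the node average is over the \emph{random} sample points $X_{N_k}$, so the Monte Carlo error must be handled with the same high-probability sampling concentration (and the same spectral-gap/bandlimitedness machinery, Definition~\ref{def:band}) that underlies the $(\log N_k/N_k)^{1/d_k}$ eigenvalue-convergence term in Theorem~\ref{thm:generalization-node}. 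Once the single-manifold discrepancy is in hand, the outer Lipschitz loss and the summation over $K$ classes are routine, so the genuine work is entirely inherited from the node-level proof applied class by class.
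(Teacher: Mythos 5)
Your proposal is correct and follows essentially the same route as the paper, whose proof of Theorem~\ref{thm:generalization-graph} simply runs the Theorem~\ref{thm:generalization-node} machinery on each of the $K$ manifolds in parallel and sums the per-manifold errors through the Lipschitz outer loss, exactly as in your decomposition into a discretization/convergence term and a $C_4\gamma$ stability term per class (with your union-bound bookkeeping being, if anything, more careful than the paper's). The one point the paper explicitly flags that you leave implicit: since the target label $y_k$ is a fixed scalar rather than a deformable function over the manifold, the $C_g\gamma$ contribution from the node-level proof vanishes — your Lipschitz step with the same $y_k$ in both arguments already yields this, and it is why $C_4$ here scales only with $C_L$.
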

\begin{proof}
    See supplementary material for proof and the definitions of $C_1$, $C_2$, $C_3$ and $C_4$.
\end{proof}

Theorem \ref{thm:generalization-graph} indicates that a graph with large enough points sampled from each underlying manifold can approximately predict the label for mismathed manifolds. This shows that GNN trained over these generated graphs can generalize to classify unseen graphs generated from mismatched manifold models. The generalization gap on the graph level also decreases with the number of sampled points over each manifold. The generalization gap increases with the dimensions of the manifolds and the size of deformations. A similar trade-off phenomenon can also be observed.

\begin{figure*}[t]
    \centering
    \begin{subfigure}[b]{0.45\textwidth}
        \centering
        \includegraphics[width=\linewidth]{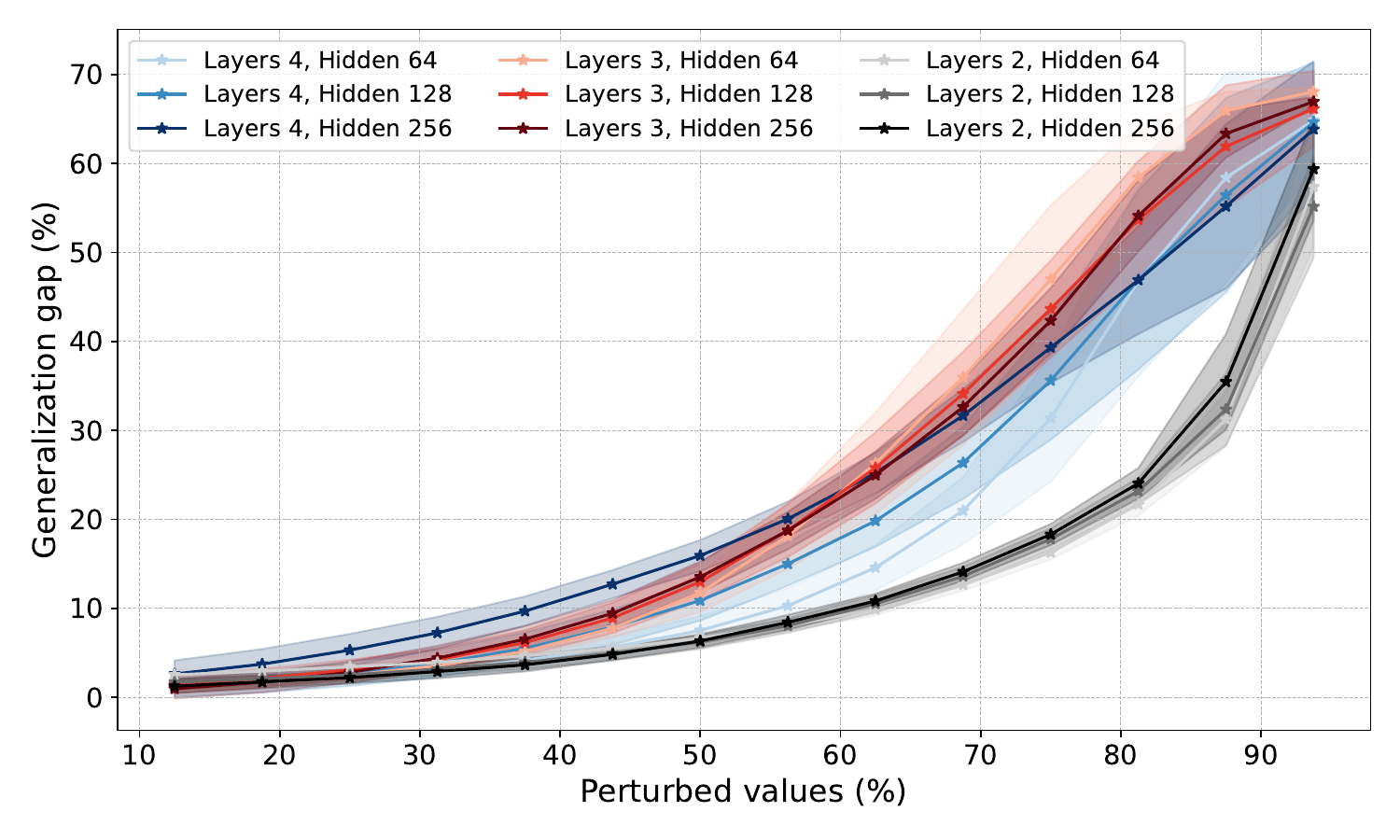}
        \caption{$90941$ nodes}
        \label{fig:edge_perturbation_90941}
    \end{subfigure}
    \hfill
    \begin{subfigure}[b]{0.45\textwidth}
        \centering
        \includegraphics[width=\linewidth]{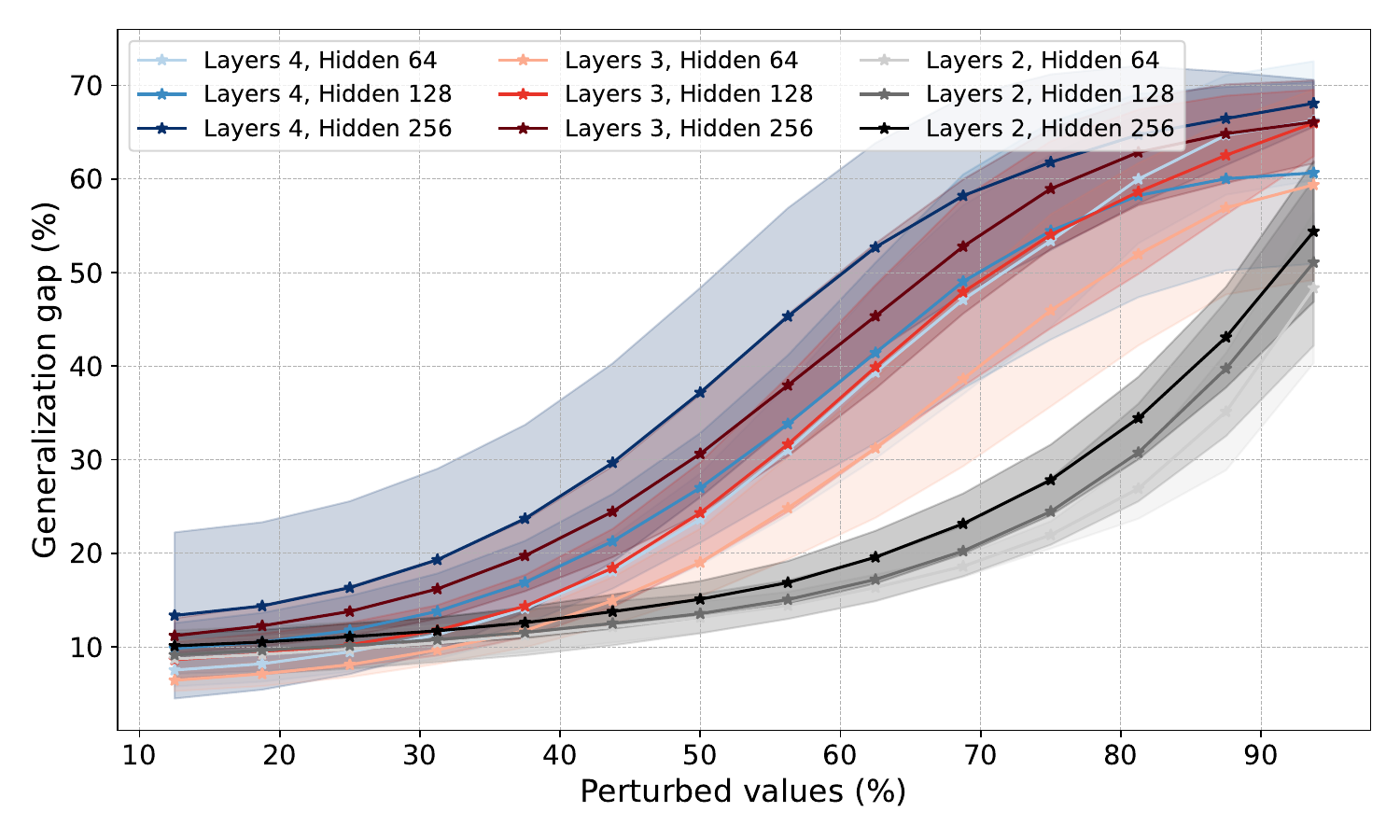}
        \caption{$2841$ nodes}
        \label{fig:node_perturbation_2841}
    \end{subfigure}
    \caption{Generalization gap as a function of the percentage of perturbed feature values in node feature perturbation. 
    }
    \label{fig:node_perturbation_two_figures}
\end{figure*}
\begin{figure*}[h]
    \centering
    \begin{subfigure}[b]{0.45\textwidth}
        \centering
        \includegraphics[width=\linewidth]{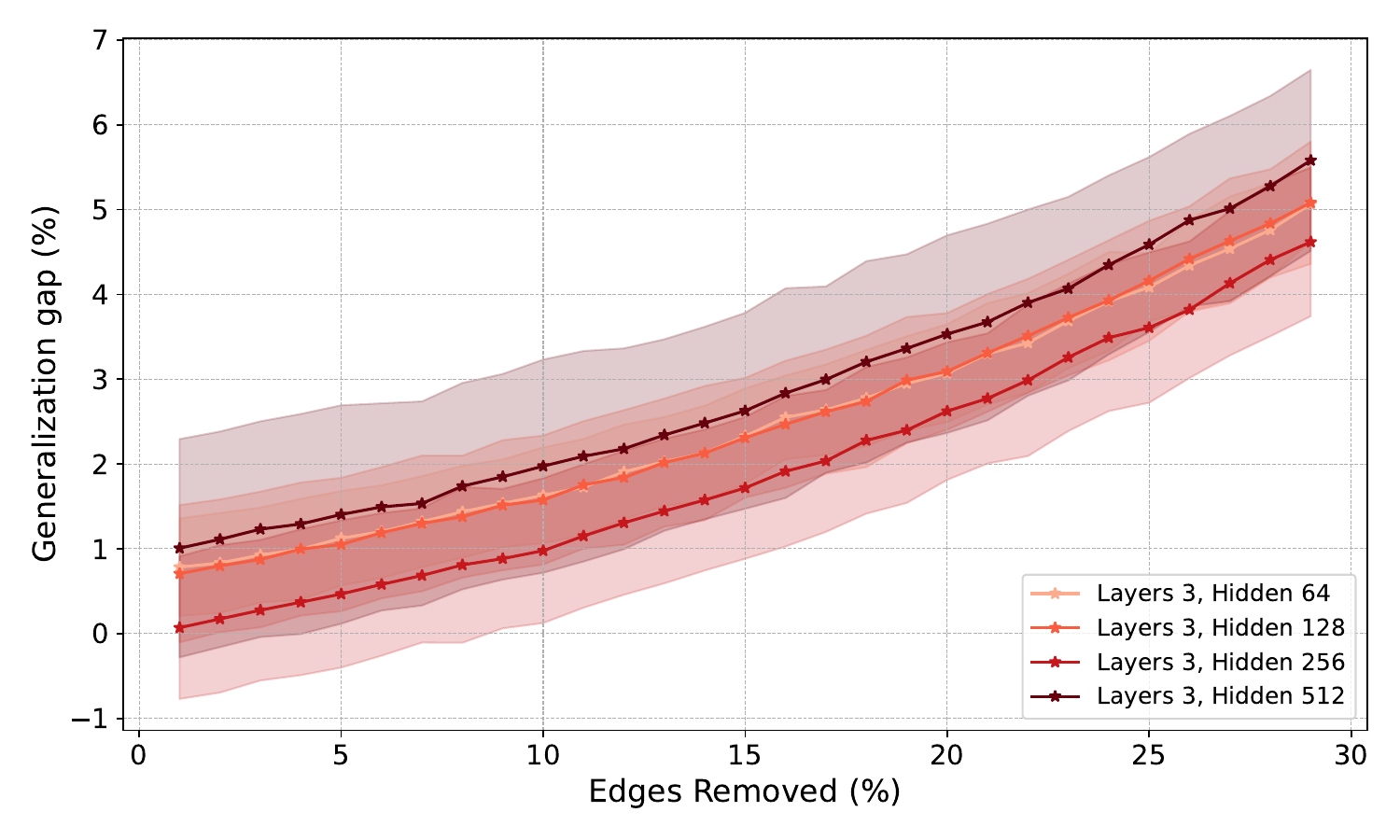}
        \caption{$3$ Layered GNN}
        \label{fig:edge_3_layers}
    \end{subfigure}
    \hfill
    \begin{subfigure}[b]{0.45\textwidth}
        \centering
        \includegraphics[width=\linewidth]{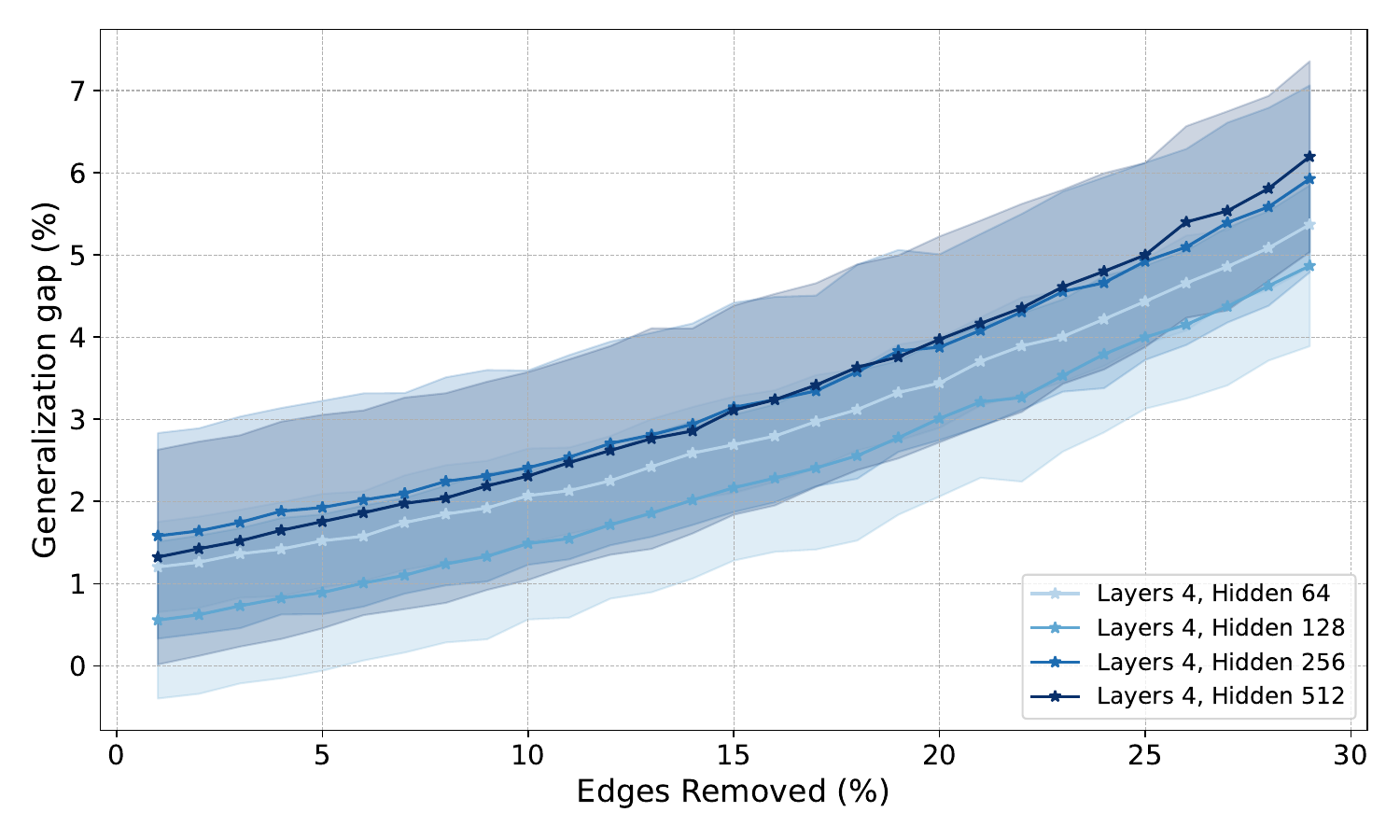}
        \caption{$4$ Layered GNN}
        \label{fig:edge_4_layers}
    \end{subfigure}
    \caption{Generalization gap as a function of the percentage of perturbed edges values in node removal perturbation. 
    }
    \label{fig:edge_perturbation_two_figures}
\end{figure*}

\section{Experiments}
\subsection{Node classification with Arxiv Dataset}
In this section, we showcase the generalization properties of a trained GNN on a real-world dataset,  OGBN-Arxiv \cite{wang2020microsoft}. The graph has $169,343$ nodes and $1,166,243$ edges, representing the citation network between computer science arXiv papers. The node features are $128$ dimensional embeddings of the title and abstract of each paper \cite{NIPS2013_9aa42b31}. The objective is to predict which of the $40$ categories the paper belongs to. We consider two types of perturbations to model the impact of the underlying manifold model mismatch -- node and edge perturbations. 

In all cases, we train the GNN on the original graph, and we evaluate it on the perturbed graph. The generalization gap is measured by the difference between the training accuracy and the perturbed testing accuracy. For node perturbations, we randomly zero out a percentage of the $128$ dimensional embeddings for all the points in the dataset. 
Theoretically, that corresponds to modifying the underlying scalar function $f$ (see equation \eqref{eqn:perturb-node}). 
For the edge perturbation, we randomly remove a percentage of the existing edges in the graph. 
This corresponds to perturbing the Laplace operator $\ccalL$ (see equation \eqref{eqn:perturb-edge}). 
We run these experiments for a varying number of nodes, by partitioning the training set in $\{1, 2, 4, 8, ... 512, 1024\}$ partitions. We train the GNN with the \textit{cross-entropy} loss, for $1000$ epochs, using $0.005$ learning rate, ADAM optimizer \cite{kingma2014adam}, and no weight decay with \texttt{ReLU} non-linearity. The purpose of the experiments is to show that GNN with more layers and more hidden units has a larger generalization gap under the same perturbation level. We further verify the relationship between the generalization gaps and the logarithm of the number of nodes in the graphs as indicated in Theorem \ref{thm:generalization-node}. 
  

\begin{figure*}[t]
    \centering
    \begin{subfigure}[b]{0.45\textwidth}
        \centering
        \includegraphics[width=\linewidth]{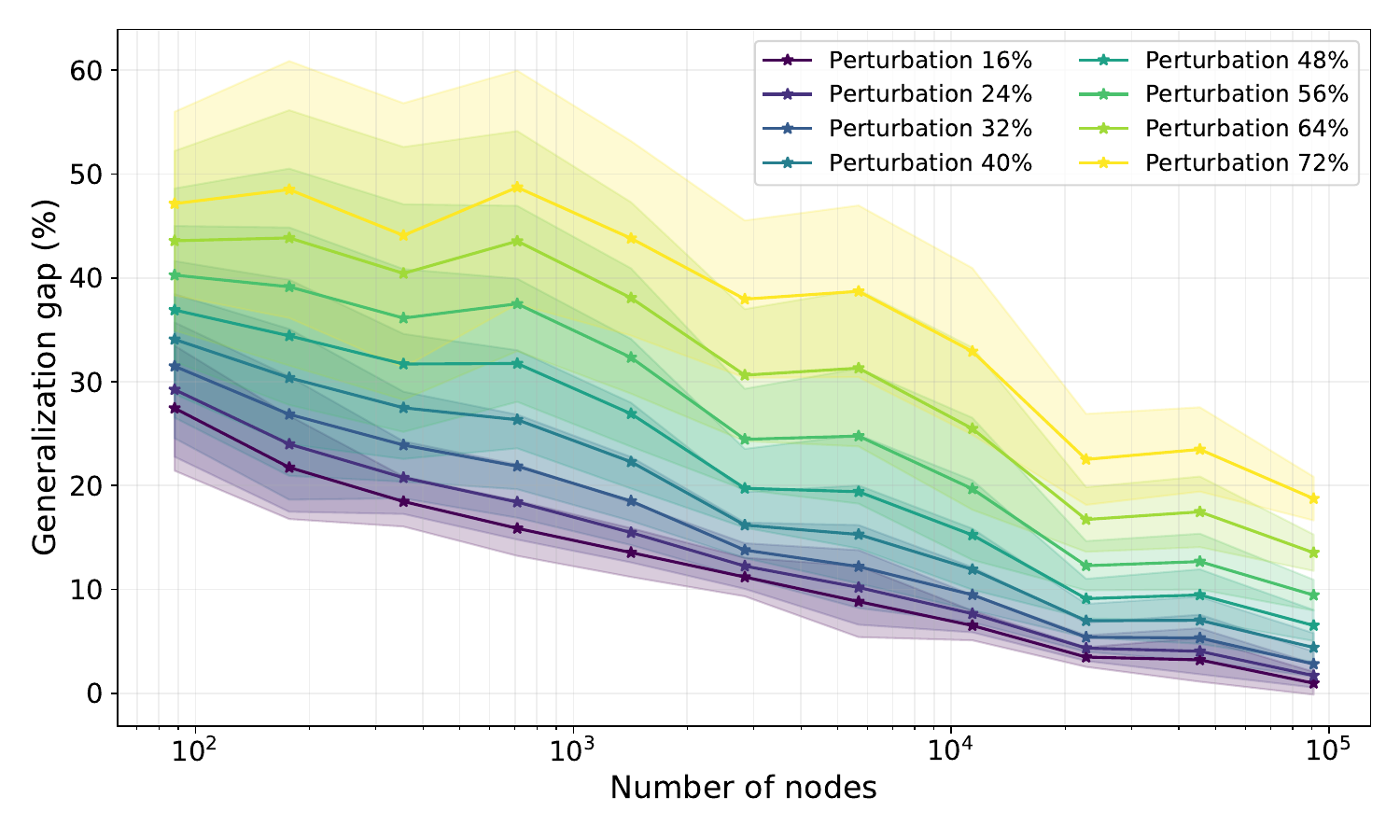}
        \caption{Node Feature Perturbation}
        \label{fig:node_perturbation}
    \end{subfigure}
    \hfill
    \begin{subfigure}[b]{0.45\textwidth}
        \centering
        \includegraphics[width=\linewidth]{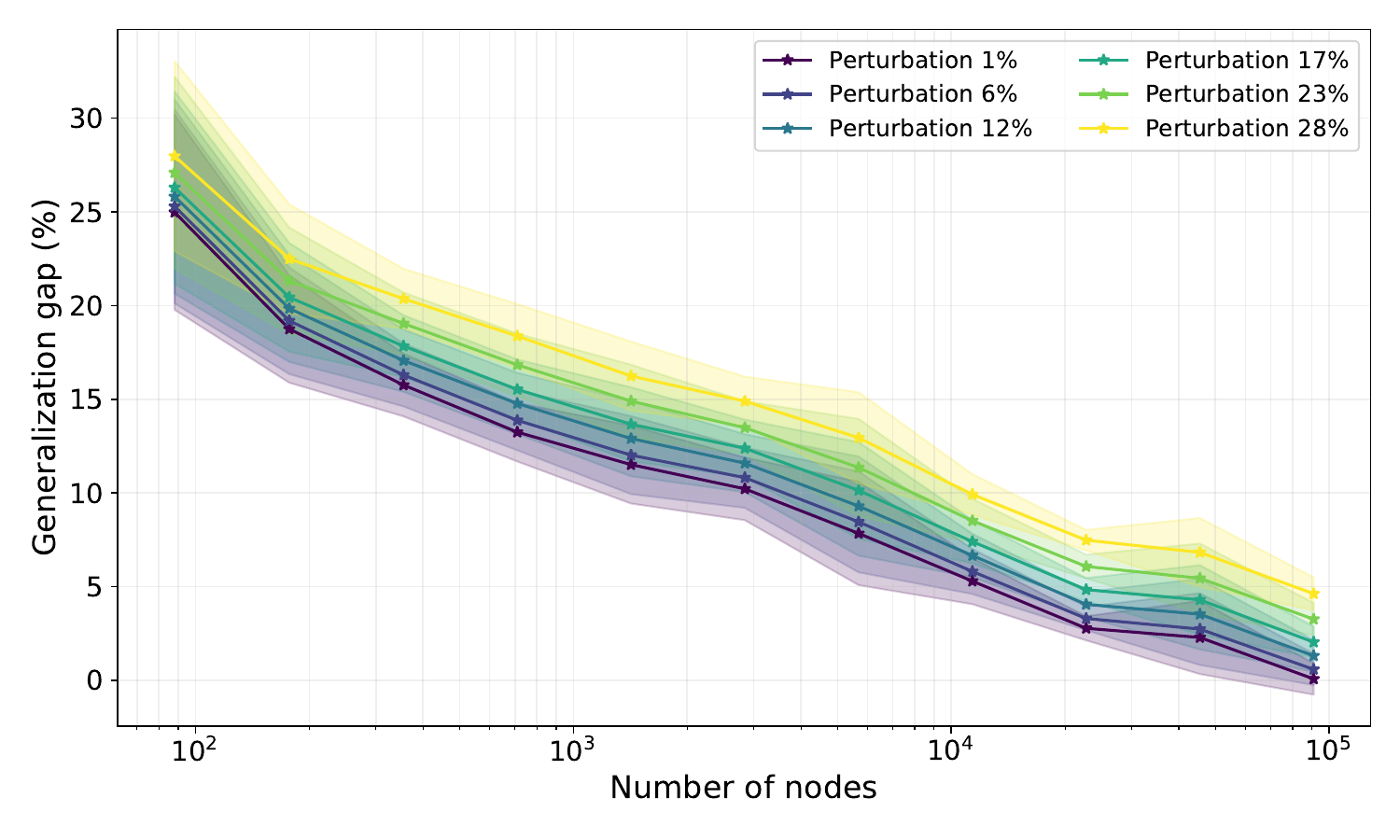}
        \caption{Edge Removal Perturbation}
        \label{fig:edge_perturbation}
    \end{subfigure}
    \caption{Generalization gap for edge and node perturbation for the Arxiv dataset for a $3$ layered, $256$ feature GNN. 
    }
    \label{fig:edge_node_perturbation}
\end{figure*}

\begin{figure*}[h]
    \centering
    \begin{subfigure}[b]{0.5\textwidth}
        \centering
        \includegraphics[width=\linewidth]{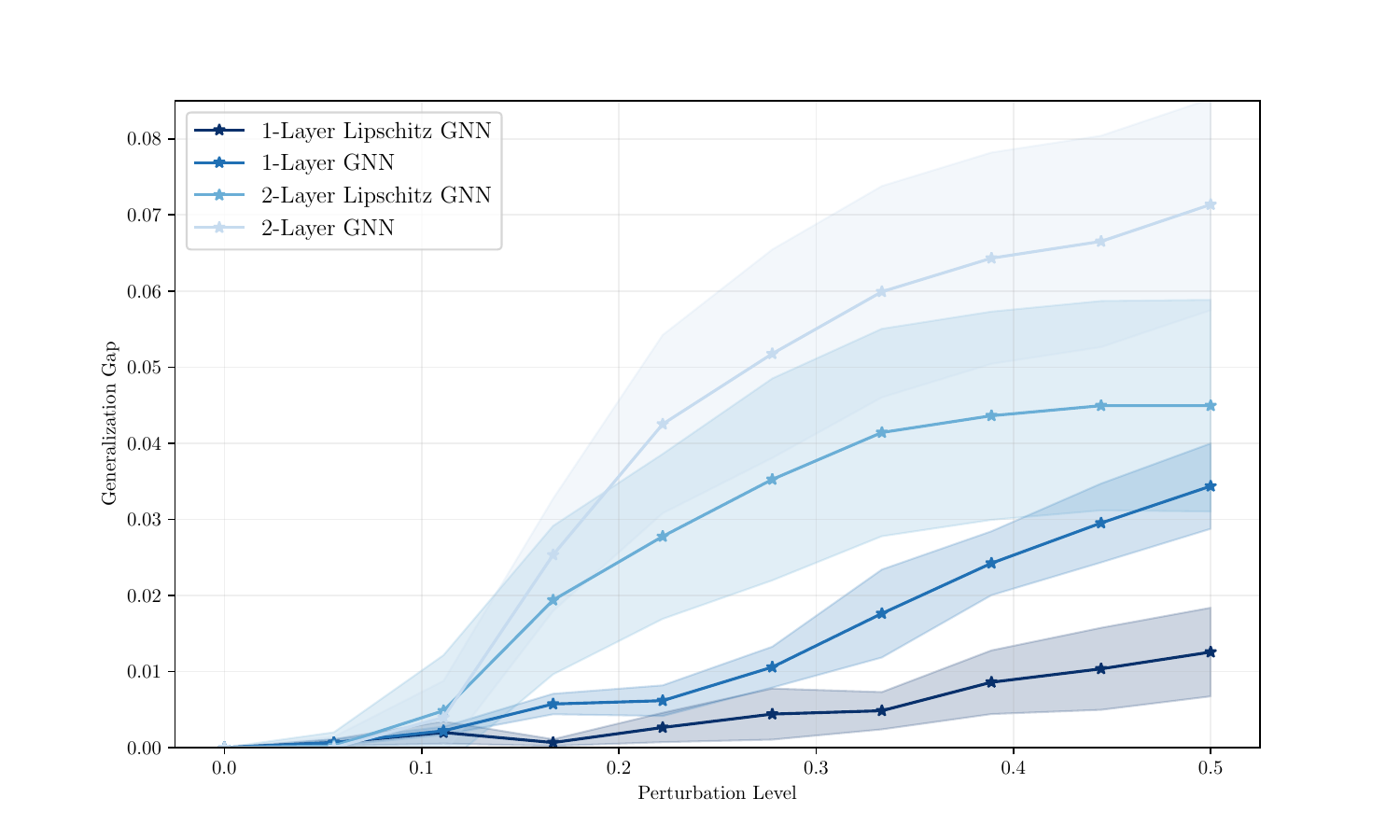}
        \caption{GNN with different architectures.}
        \label{fig:graph-modelnet}
    \end{subfigure}
    \hfill
    \begin{subfigure}[b]{0.49\textwidth}
        \centering
        \includegraphics[width=\linewidth]{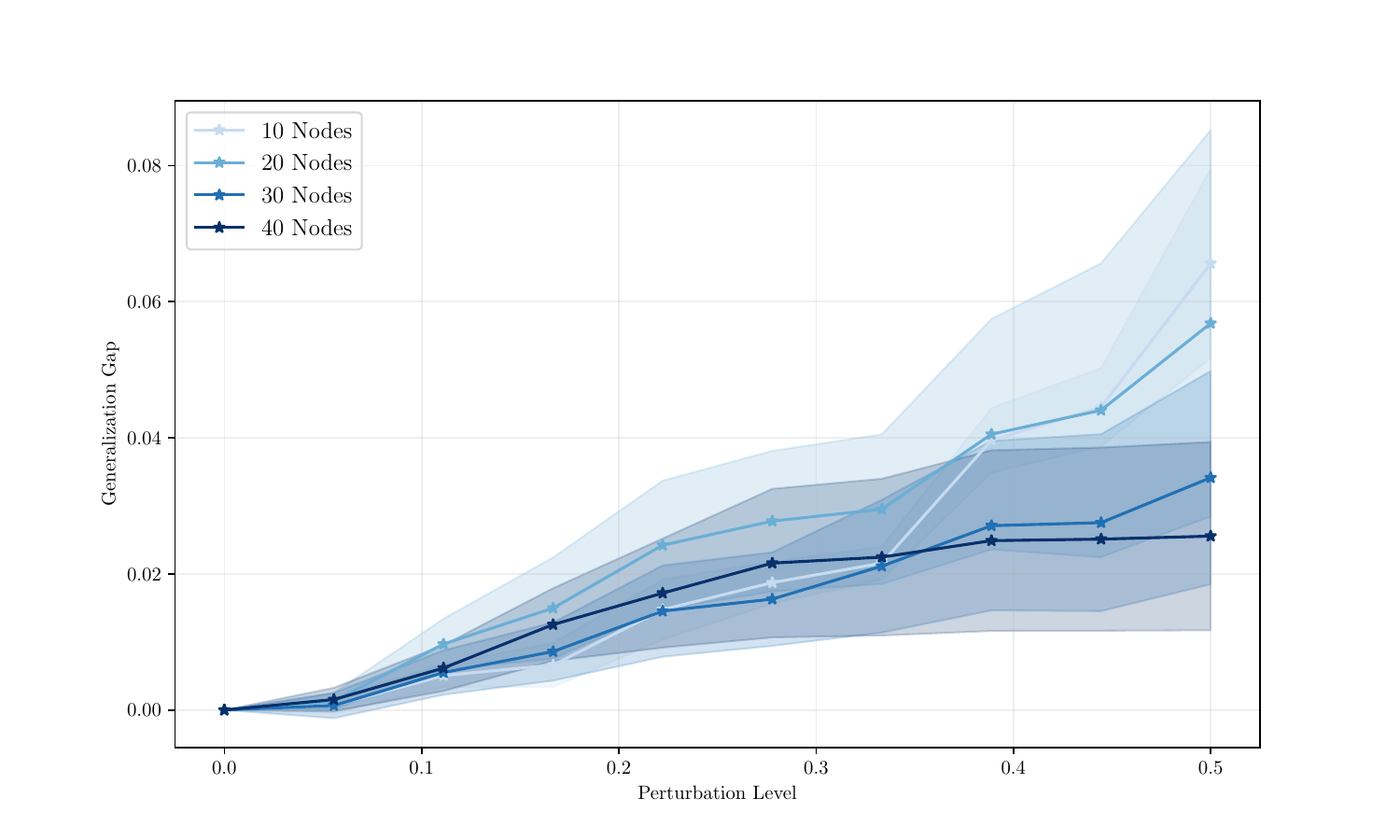}
        \caption{GNN on different number of nodes.}
        \label{fig:graph-node-modelnet}
    \end{subfigure}
    \caption{Generalization gap as a function of number of nodes in the Point Cloud Classification.
    }
    \label{fig:graph_class_perturbation}
\end{figure*}
In Figure \ref{fig:node_perturbation_two_figures} we can see the generalization gap as a function of the perturbation level for different GNN architectures. In these two figures, we confirm that GNNs with fewer layers and fewer hidden units are more robust to changes in the underlying manifold. This can be seen as the black lines ($2$ layers), are below both the red ($3$ layers) and the blue lines($4$ layers). Intuitively, this can be attributed to overfitting the training set, given that more capacity in the GNN translates into a better training set accuracy. We also showcase that if the number of nodes in the training set is larger, the GNN is more robust to changes on the graph, given that the generalization gap increases more slowly in the GNN trained with $90941$ nodes (Figure \ref{fig:edge_perturbation_90941}), than in the one trained with $2841$ nodes (Figure \ref{fig:node_perturbation_2841}).  In Figure \ref{fig:edge_perturbation_two_figures}, we plot the generalization gap as a function of the perturbation magnitude for a GNN with $3$ (Figure \ref{fig:edge_3_layers}), and $4$ layers (Figure \ref{fig:edge_4_layers}). In both cases, the GNN with the largest number of hidden units is at the top, thus indicating that a  larger generalization gap as indicated in Remark 1.

Figure \ref{fig:edge_node_perturbation} shows the generalization gaps of GNNs under node feature perturbations and edge perturbations decrease approximately linearly with the number of nodes in the logarithmic scale while increasing with the perturbation sizes. In Figure \ref{fig:node_perturbation} and \ref{fig:edge_perturbation}, we show the generalization gap of a $3$ layered GNN with $256$ hidden units as a function of the number of nodes in the training set under node feature perturbation and edge removal perturbation, respectively. Each color in the figure represents a different perturbation level, going from less perturbation ($16\%$ darker blue) to higher perturbation ($72\%$ lighter yellow). 
As can be seen in the plot, the GNNs generalization degrades as the perturbation level increases. Aligning with our theory, the generalization gap is linear with respect to the logarithm of the number of nodes. This is also true when the perturbation level increases. 



\subsection{Graph Classification with ModelNet Dataset}

We evaluate the generalization gap of GNNs on graph level with the ModelNet10 dataset \cite{wu20153d}. The dataset contains 3991 meshed CAD models from 10 categories for training and 908 models for testing. For each model, discrete points are uniformly randomly sampled from all points of the model to form the graphs. Each point is characterized by the 3D coordinates as features. Each node in the graph can be modeled as the sampling point and each edge weight is constructed based on the distance between each pair of nodes.
The input graph features are set as the coordinates of each point, and the weights of the edges are calculated based on the Euclidean distance between the points. The mismatched point cloud model adds a Gaussian random variable with mean $\gamma$ and variance $2\gamma$ to each coordinate of every sampled point. This can be seen as the underlying manifold mismatch. We calculate the generalization gap by training GNNs on graphs with $N = 40$ sampled points, and plotting the differences between the testing accuracy on the trained graphs sampled from normal point clouds and the testing accuracy on the graphs sampled from deformed point clouds with perturbation level $\gamma$. 
We implement Graph Neural Networks (GNN) with 1 and 2 layers with a single layer containing $F_0=3$ input features which are the 3d coordinates of each point, $F_1=64$ output features and $K=5$ filter taps. While the architectures with 2 layers has another layer with $F_2= 32$ features and $5$ filter taps. We use the \texttt{ReLU} as nonlinearity. All architectures also include a linear readout layer mapping the final output features to a binary scalar that estimates the classification. 
Figure \ref{fig:graph-modelnet}  shows the generalization gaps for GNNs with or without Lipschitz continuity assumptions of the graph filters. We observe that the continuity assumptions imposed on the graph filters help to improve the generalization capabilities as Theorem \ref{thm:generalization-graph} shows.
Figure \ref{fig:graph-node-modelnet} shows the results of GNNs on different number of nodes. We can see that the generalization gaps of GNNs scale with the size of the GNN architectures and scale with the size of mismatch. Figure \ref{fig:graph-node-modelnet} also shows that the generalization gap decreases with the number of nodes.  

\section{Conclusion}
We showed that the robustness of GNN generalization to model mismatch from a manifold perspective. We focused on graphs derived from manifolds, and we established that GNNs equipped with low-pass and integral Lipschitz graph filters exhibit robust generalization. This generalization extends to previously unseen nodes and graphs derived from mismatched manifolds. Notably, we identified a tradeoff between the robust generalization and discriminability of GNNs. We validate our results both on node-level and graph-level classification tasks with real-world datasets.

\urlstyle{same}
\bibliographystyle{plain}
\bibliography{Robust}

\begin{thebibliography}{10}

\bibitem{aamari2021statistical}
Eddie Aamari and Alexander Knop.
\newblock Statistical query complexity of manifold estimation.
\newblock In {\em Proceedings of the 53rd Annual ACM SIGACT Symposium on Theory
  of Computing}, pages 116--122, 2021.

\bibitem{baranwal2021graph}
Aseem Baranwal, Kimon Fountoulakis, and Aukosh Jagannath.
\newblock Graph convolution for semi-supervised classification: Improved linear
  separability and out-of-distribution generalization.
\newblock In {\em International Conference on Machine Learning}, pages
  684--693. PMLR, 2021.

\bibitem{billio2012econometric}
Monica Billio, Mila Getmansky, Andrew~W Lo, and Loriana Pelizzon.
\newblock Econometric measures of connectedness and systemic risk in the
  finance and insurance sectors.
\newblock {\em Journal of financial economics}, 104(3):535--559, 2012.

\bibitem{bronstein2017geometric}
Michael~M Bronstein, Joan Bruna, Yann LeCun, Arthur Szlam, and Pierre
  Vandergheynst.
\newblock Geometric deep learning: going beyond euclidean data.
\newblock {\em IEEE Signal Processing Magazine}, 34(4):18--42, 2017.

\bibitem{calder2022improved}
Jeff Calder and Nicolas~Garcia Trillos.
\newblock Improved spectral convergence rates for graph laplacians on
  $\varepsilon$-graphs and k-nn graphs.
\newblock {\em Applied and Computational Harmonic Analysis}, 60:123--175, 2022.

\bibitem{cervino2023learning}
Juan Cervino, Luana Ruiz, and Alejandro Ribeiro.
\newblock Learning by transference: Training graph neural networks on growing
  graphs.
\newblock {\em IEEE Transactions on Signal Processing}, 71:233--247, 2023.

\bibitem{chakraborty2020manifoldnet}
Rudrasis Chakraborty, Jose Bouza, Jonathan~H Manton, and Baba~C Vemuri.
\newblock Manifoldnet: A deep neural network for manifold-valued data with
  applications.
\newblock {\em IEEE Transactions on Pattern Analysis and Machine Intelligence},
  44(2):799--810, 2020.

\bibitem{chew2023convergence}
Joyce~A Chew, Deanna Needell, and Michael Perlmutter.
\newblock A convergence rate for manifold neural networks.
\newblock In {\em 2023 International Conference on Sampling Theory and
  Applications (SampTA)}, pages 1--5. IEEE, 2023.

\bibitem{choi2004kernel}
Heeyoul Choi and Seungjin Choi.
\newblock Kernel isomap.
\newblock {\em Electronics letters}, 40(25):1612--1613, 2004.

\bibitem{LPF_consensus}
Morris~H. Degroot.
\newblock Reaching a consensus.
\newblock {\em Journal of the American Statistical Association},
  69(345):118--121, 1974.

\bibitem{ding2021closer}
Mucong Ding, Kezhi Kong, Jiuhai Chen, John Kirchenbauer, Micah Goldblum, David
  Wipf, Furong Huang, and Tom Goldstein.
\newblock A closer look at distribution shifts and out-of-distribution
  generalization on graphs.
\newblock In {\em NeurIPS 2021 Workshop on Distribution Shifts: Connecting
  Methods and Applications}, 2021.

\bibitem{esser2021learning}
Pascal Esser, Leena Chennuru~Vankadara, and Debarghya Ghoshdastidar.
\newblock Learning theory can (sometimes) explain generalisation in graph
  neural networks.
\newblock {\em Advances in Neural Information Processing Systems},
  34:27043--27056, 2021.

\bibitem{fan2023generalizing}
Shaohua Fan, Xiao Wang, Chuan Shi, Peng Cui, and Bai Wang.
\newblock Generalizing graph neural networks on out-of-distribution graphs.
\newblock {\em IEEE Transactions on Pattern Analysis and Machine Intelligence},
  2023.

\bibitem{farahmand2007manifold}
Amir~Massoud Farahmand, Csaba Szepesv{\'a}ri, and Jean-Yves Audibert.
\newblock Manifold-adaptive dimension estimation.
\newblock In {\em Proceedings of the 24th international conference on Machine
  learning}, pages 265--272, 2007.

\bibitem{fefferman2016testing}
Charles Fefferman, Sanjoy Mitter, and Hariharan Narayanan.
\newblock Testing the manifold hypothesis.
\newblock {\em Journal of the American Mathematical Society}, 29(4):983--1049,
  2016.

\bibitem{gama2019convolutional}
Fernando Gama, Antonio~G Marques, Geert Leus, and Alejandro Ribeiro.
\newblock Convolutional neural network architectures for signals supported on
  graphs.
\newblock {\em IEEE Transactions on Signal Processing}, 67(4):1034--1049, 2019.

\bibitem{garg2020generalization}
Vikas Garg, Stefanie Jegelka, and Tommi Jaakkola.
\newblock Generalization and representational limits of graph neural networks.
\newblock In {\em International Conference on Machine Learning}, pages
  3419--3430. PMLR, 2020.

\bibitem{gosrich2022coverage}
Walker Gosrich, Siddharth Mayya, Rebecca Li, James Paulos, Mark Yim, Alejandro
  Ribeiro, and Vijay Kumar.
\newblock Coverage control in multi-robot systems via graph neural networks.
\newblock In {\em 2022 International Conference on Robotics and Automation
  (ICRA)}, pages 8787--8793. IEEE, 2022.

\bibitem{grigor2006heat}
Alexander Grigor’yan.
\newblock Heat kernels on weighted manifolds and applications.
\newblock {\em Cont. Math}, 398(2006):93--191, 2006.

\bibitem{he2023network}
Jiashu He, Charilaos~I Kanatsoulis, and Alejandro Ribeiro.
\newblock Network alignment with transferable graph autoencoders.
\newblock {\em arXiv preprint arXiv:2310.03272}, 2023.

\bibitem{ju2023generalization}
Haotian Ju, Dongyue Li, Aneesh Sharma, and Hongyang~R Zhang.
\newblock Generalization in graph neural networks: Improved pac-bayesian bounds
  on graph diffusion.
\newblock In {\em International Conference on Artificial Intelligence and
  Statistics}, pages 6314--6341. PMLR, 2023.

\bibitem{kawaguchi2022generalization}
Kenji Kawaguchi, Leslie~Pack Kaelbling, and Yoshua Bengio.
\newblock Generalization in deep learning.
\newblock {\em In Mathematical Aspects of Deep Learning. Cambridge University
  Press}, 2022.

\bibitem{keriven2020convergence}
Nicolas Keriven, Alberto Bietti, and Samuel Vaiter.
\newblock Convergence and stability of graph convolutional networks on large
  random graphs.
\newblock {\em Advances in Neural Information Processing Systems},
  33:21512--21523, 2020.

\bibitem{kingma2014adam}
Diederik~P Kingma and Jimmy Ba.
\newblock Adam: A method for stochastic optimization.
\newblock {\em arXiv preprint arXiv:1412.6980}, 2014.

\bibitem{kipf2017semi}
Thomas~N Kipf and Max Welling.
\newblock Semi-supervised classification with graph convolutional networks.
\newblock In {\em International Conference on Learning Representations}, 2017.

\bibitem{le2024limits}
Thien Le and Stefanie Jegelka.
\newblock Limits, approximation and size transferability for gnns on sparse
  graphs via graphops.
\newblock {\em Advances in Neural Information Processing Systems}, 36, 2024.

\bibitem{lee2017transfer}
Jaekoo Lee, Hyunjae Kim, Jongsun Lee, and Sungroh Yoon.
\newblock Transfer learning for deep learning on graph-structured data.
\newblock In {\em Proceedings of the AAAI Conference on Artificial
  Intelligence}, volume~31, 2017.

\bibitem{levie2024graphon}
Ron Levie.
\newblock A graphon-signal analysis of graph neural networks.
\newblock {\em Advances in Neural Information Processing Systems}, 36, 2024.

\bibitem{levie2021transferability}
Ron Levie, Wei Huang, Lorenzo Bucci, Michael Bronstein, and Gitta Kutyniok.
\newblock Transferability of spectral graph convolutional neural networks.
\newblock {\em Journal of Machine Learning Research}, 22(272):1--59, 2021.

\bibitem{li2022ood}
Haoyang Li, Xin Wang, Ziwei Zhang, and Wenwu Zhu.
\newblock Ood-gnn: Out-of-distribution generalized graph neural network.
\newblock {\em IEEE Transactions on Knowledge and Data Engineering},
  35(7):7328--7340, 2022.

\bibitem{liao2020pac}
Renjie Liao, Raquel Urtasun, and Richard Zemel.
\newblock A pac-bayesian approach to generalization bounds for graph neural
  networks.
\newblock In {\em International Conference on Learning Representations}, 2020.

\bibitem{lovasz2012large}
L{\'a}szl{\'o} Lov{\'a}sz.
\newblock {\em Large networks and graph limits}, volume~60.
\newblock American Mathematical Soc., 2012.

\bibitem{lunga2013manifold}
Dalton Lunga, Saurabh Prasad, Melba~M Crawford, and Okan Ersoy.
\newblock Manifold-learning-based feature extraction for classification of
  hyperspectral data: A review of advances in manifold learning.
\newblock {\em IEEE Signal Processing Magazine}, 31(1):55--66, 2013.

\bibitem{ma2021subgroup}
Jiaqi Ma, Junwei Deng, and Qiaozhu Mei.
\newblock Subgroup generalization and fairness of graph neural networks.
\newblock {\em Advances in Neural Information Processing Systems},
  34:1048--1061, 2021.

\bibitem{maskey2024generalization}
Sohir Maskey, Gitta Kutyniok, and Ron Levie.
\newblock Generalization bounds for message passing networks on mixture of
  graphons.
\newblock {\em arXiv preprint arXiv:2404.03473}, 2024.

\bibitem{maskey2023transferability}
Sohir Maskey, Ron Levie, and Gitta Kutyniok.
\newblock Transferability of graph neural networks: an extended graphon
  approach.
\newblock {\em Applied and Computational Harmonic Analysis}, 63:48--83, 2023.

\bibitem{maskey2022generalization}
Sohir Maskey, Ron Levie, Yunseok Lee, and Gitta Kutyniok.
\newblock Generalization analysis of message passing neural networks on large
  random graphs.
\newblock {\em Advances in neural information processing systems},
  35:4805--4817, 2022.

\bibitem{NIPS2013_9aa42b31}
Tomas Mikolov, Ilya Sutskever, Kai Chen, Greg~S Corrado, and Jeff Dean.
\newblock Distributed representations of words and phrases and their
  compositionality.
\newblock In C.J. Burges, L.~Bottou, M.~Welling, Z.~Ghahramani, and K.Q.
  Weinberger, editors, {\em Advances in Neural Information Processing Systems},
  volume~26. Curran Associates, Inc., 2013.

\bibitem{monti2017geometric}
Federico Monti, Davide Boscaini, Jonathan Masci, Emanuele Rodola, Jan Svoboda,
  and Michael~M Bronstein.
\newblock Geometric deep learning on graphs and manifolds using mixture model
  cnns.
\newblock In {\em Proceedings of the IEEE conference on computer vision and
  pattern recognition}, pages 5115--5124, 2017.

\bibitem{narayanan2009sample}
Hariharan Narayanan and Partha Niyogi.
\newblock On the sample complexity of learning smooth cuts on a manifold.
\newblock In {\em COLT}, 2009.

\bibitem{parada2023graphon}
Alejandro Parada-Mayorga, Zhiyang Wang, and Alejandro Ribeiro.
\newblock Graphon pooling for reducing dimensionality of signals and
  convolutional operators on graphs.
\newblock {\em IEEE Transactions on Signal Processing}, 2023.

\bibitem{ramakrishna2020user}
Raksha Ramakrishna, Hoi-To Wai, and Anna Scaglione.
\newblock A user guide to low-pass graph signal processing and its
  applications: Tools and applications.
\newblock {\em IEEE Signal Processing Magazine}, 37(6):74--85, 2020.

\bibitem{ruiz2020graphon}
Luana Ruiz, Luiz Chamon, and Alejandro Ribeiro.
\newblock Graphon neural networks and the transferability of graph neural
  networks.
\newblock {\em Advances in Neural Information Processing Systems},
  33:1702--1712, 2020.

\bibitem{ruiz2023transferability}
Luana Ruiz, Luiz~FO Chamon, and Alejandro Ribeiro.
\newblock Transferability properties of graph neural networks.
\newblock {\em IEEE Transactions on Signal Processing}, 2023.

\bibitem{sandryhaila2013discrete}
Aliaksei Sandryhaila and Jos{\'e}~MF Moura.
\newblock Discrete signal processing on graphs.
\newblock {\em IEEE transactions on signal processing}, 61(7):1644--1656, 2013.

\bibitem{scarselli2018vapnik}
Franco Scarselli, Ah~Chung Tsoi, and Markus Hagenbuchner.
\newblock The vapnik--chervonenkis dimension of graph and recursive neural
  networks.
\newblock {\em Neural Networks}, 108:248--259, 2018.

\bibitem{shen2024zero}
Yangyi Shen, Beatrice Bevilacqua, Joshua Robinson, Charilaos Kanatsoulis, Jure
  Leskovec, and Bruno Ribeiro.
\newblock Zero-shot generalization of gnns over distinct attribute domains.
\newblock In {\em ICML 2024 Workshop on Theoretical Foundations of Foundation
  Models}, 2024.

\bibitem{talwalkar2008large}
Ameet Talwalkar, Sanjiv Kumar, and Henry Rowley.
\newblock Large-scale manifold learning.
\newblock In {\em 2008 IEEE Conference on Computer Vision and Pattern
  Recognition}, pages 1--8. IEEE, 2008.

\bibitem{tian2024graphs}
Qin Tian, Wenjun Wang, Chen Zhao, Minglai Shao, Wang Zhang, and Dong Li.
\newblock Graphs generalization under distribution shifts.
\newblock {\em arXiv preprint arXiv:2403.16334}, 2024.

\bibitem{loring2011introduction}
Loring~W.. Tu.
\newblock {\em An introduction to manifolds}.
\newblock Springer., 2011.

\bibitem{verma2019stability}
Saurabh Verma and Zhi-Li Zhang.
\newblock Stability and generalization of graph convolutional neural networks.
\newblock In {\em Proceedings of the 25th ACM SIGKDD International Conference
  on Knowledge Discovery \& Data Mining}, pages 1539--1548, 2019.

\bibitem{wang2020microsoft}
Kuansan Wang, Zhihong Shen, Chiyuan Huang, Chieh-Han Wu, Yuxiao Dong, and
  Anshul Kanakia.
\newblock Microsoft academic graph: When experts are not enough.
\newblock {\em Quantitative Science Studies}, 1(1):396--413, 2020.

\bibitem{wang2024manifold}
Zhiyang Wang, Juan Cervino, and Alejandro Ribeiro.
\newblock A manifold perspective on the statistical generalization of graph
  neural networks.
\newblock {\em arXiv preprint arXiv:2406.05225}, 2024.

\bibitem{wangcervino2024}
Zhiyang Wang, Juan Cerviño, and Alejandro Ribeiro.
\newblock Generalization of geometric graph neural networks.
\newblock {\em submitted to Asilomar Conference on Signals, Systems, and
  Computers 2024.}, 2024.

\bibitem{wangcervnino2024neurips}
Zhiyang Wang, Juan Cerviño, and Alejandro Ribeiro.
\newblock A manifold perspective on the statistical generalization of graph
  neural networks.
\newblock {\em arXiv preprint arXiv:2406.05225}, 2024.

\bibitem{wang2022convolutional}
Zhiyang Wang, Luana Ruiz, and Alejandro Ribeiro.
\newblock Convolutional neural networks on manifolds: From graphs and back.
\newblock In {\em 2022 56th Asilomar Conference on Signals, Systems, and
  Computers}, pages 356--360. IEEE, 2022.

\bibitem{wang2023geometric}
Zhiyang Wang, Luana Ruiz, and Alejandro Ribeiro.
\newblock Geometric graph filters and neural networks: Limit properties and
  discriminability trade-offs.
\newblock {\em IEEE Transactions on Signal Processing}, 2024.

\bibitem{wang2024stability}
Zhiyang Wang, Luana Ruiz, and Alejandro Ribeiro.
\newblock Stability to deformations of manifold filters and manifold neural
  networks.
\newblock {\em IEEE Transactions on Signal Processing}, pages 1--15, 2024.

\bibitem{wu2022graph}
Shiwen Wu, Fei Sun, Wentao Zhang, Xu~Xie, and Bin Cui.
\newblock Graph neural networks in recommender systems: a survey.
\newblock {\em ACM Computing Surveys}, 55(5):1--37, 2022.

\bibitem{wu2004extended}
Yiming Wu and Kap~Luk Chan.
\newblock An extended isomap algorithm for learning multi-class manifold.
\newblock In {\em Proceedings of 2004 International Conference on Machine
  Learning and Cybernetics (IEEE Cat. No. 04EX826)}, volume~6, pages
  3429--3433. IEEE, 2004.

\bibitem{wu20153d}
Zhirong Wu, Shuran Song, Aditya Khosla, Fisher Yu, Linguang Zhang, Xiaoou Tang,
  and Jianxiong Xiao.
\newblock 3d shapenets: A deep representation for volumetric shapes.
\newblock In {\em Proceedings of the IEEE conference on computer vision and
  pattern recognition}, pages 1912--1920, 2015.

\bibitem{yang2016multi}
Bo~Yang, Ming Xiang, and Yupei Zhang.
\newblock Multi-manifold discriminant isomap for visualization and
  classification.
\newblock {\em Pattern Recognition}, 55:215--230, 2016.

\bibitem{yehudai2021local}
Gilad Yehudai, Ethan Fetaya, Eli Meirom, Gal Chechik, and Haggai Maron.
\newblock From local structures to size generalization in graph neural
  networks.
\newblock In {\em International Conference on Machine Learning}, pages
  11975--11986. PMLR, 2021.

\bibitem{yin2023coco}
Nan Yin, Li~Shen, Mengzhu Wang, Long Lan, Zeyu Ma, Chong Chen, Xian-Sheng Hua,
  and Xiao Luo.
\newblock Coco: A coupled contrastive framework for unsupervised domain
  adaptive graph classification.
\newblock In {\em International Conference on Machine Learning}, pages
  40040--40053. PMLR, 2023.

\bibitem{zhou2021generalization}
Xianchen Zhou and Hongxia Wang.
\newblock The generalization error of graph convolutional networks may enlarge
  with more layers.
\newblock {\em Neurocomputing}, 424:97--106, 2021.

\bibitem{zhu2021transfer}
Qi~Zhu, Carl Yang, Yidan Xu, Haonan Wang, Chao Zhang, and Jiawei Han.
\newblock Transfer learning of graph neural networks with ego-graph information
  maximization.
\newblock {\em Advances in Neural Information Processing Systems},
  34:1766--1779, 2021.

\end{thebibliography}

\clearpage 
\appendix
\section{Experiment Details}
All experiments were done with a \textit{NVIDIA GeForce RTX 3090}, and each set of experiments took at most $10$ hours to complete. The ArXiv dataset and ModelNet10 dataset used in this paper are public and free to use. They can be downloaded using the $pytorch$ package (\url{https://pytorch-geometric.readthedocs.io/en/latest/modules/datasets.html}), the \textit{ogb} package (\url{https://ogb.stanford.edu/docs/nodeprop/}) and the Princeton ModelNet project (\url{https://modelnet.cs.princeton.edu/}). In total, the datasets occupy around $5$ GB. However, they do not need to be all stored at the same time, as the experiments that we run can be done in series. We will release the codes after the paper is accepted.
\subsection{Node Classification}
we used the graph convolutional layer \texttt{GCN}, and trained for $1000$ epochs. For the optimizer, we used $\texttt{AdamW}$, with using a learning rate of $0.01$, and $0$ weight decay. We trained using the graph convolutional layer, with a varying number of layers and hidden units. For dropout, we used $0.5$. We trained using the cross-entropy loss. In all cases, we trained $2$ and $3$ layered GNNs. 
\subsection{ModelNet10 graph classification tasks}
ModelNet10 dataset \cite{wu20153d} includes 3,991 meshed CAD models from 10 categories for training and 908 models for testing as Figure \ref{fig:points-10} shows. In each model, $N$ points are uniformly randomly selected to construct graphs to approximate the underlying model, such as chairs, tables.

The weight function of the constructed graph is determined as (6) with $\epsilon = 0.001$. 
We calculate the Laplacian matrix for each graph as the input graph shift operator. In this experiment, we implement GNNs with different numbers of layers and hidden units with $K=5$ filters in each layer. All the GNN architectures are trained by minimizing the cross-entropy loss. We implement an ADAM optimizer with the learning rate set as 0.005 along with the forgetting factors 0.9 and 0.999. We carry out the training for 40 epochs with the size of batches set as 10. We run 5 random dataset partitions and show the average performances and the standard deviation across these partitions. 

\begin{figure*}[t!]
\centering
\includegraphics[trim= 80 50 80  0,clip,width=0.2  \textwidth]{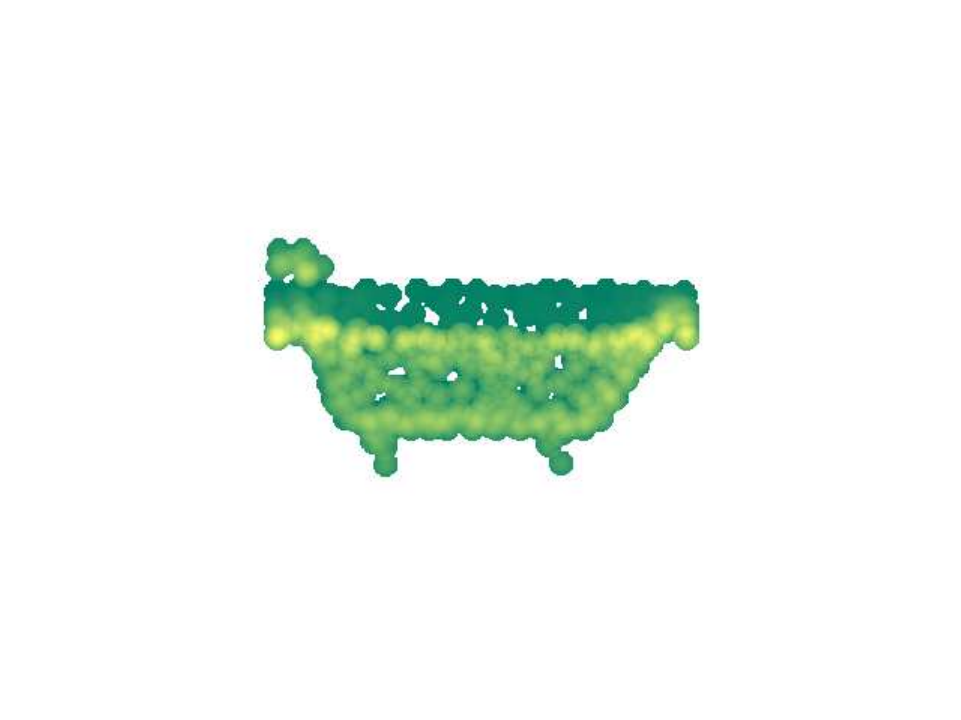}
\includegraphics[trim=90 0 100 0,width=0.15\textwidth]{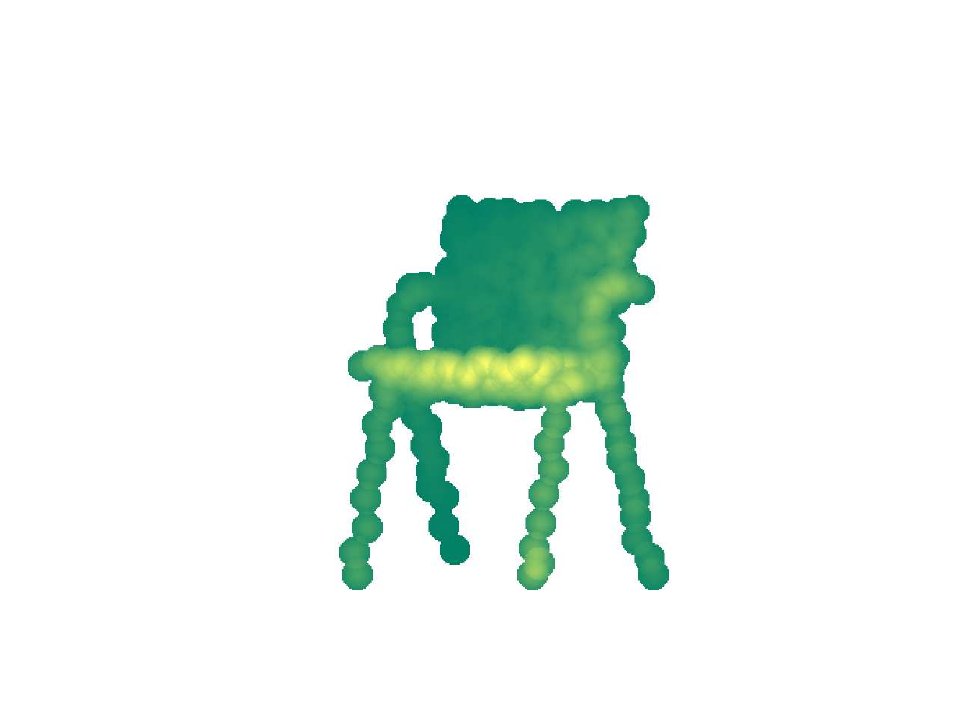}  
\includegraphics[trim=60 0 100 0,width=0.15\textwidth]{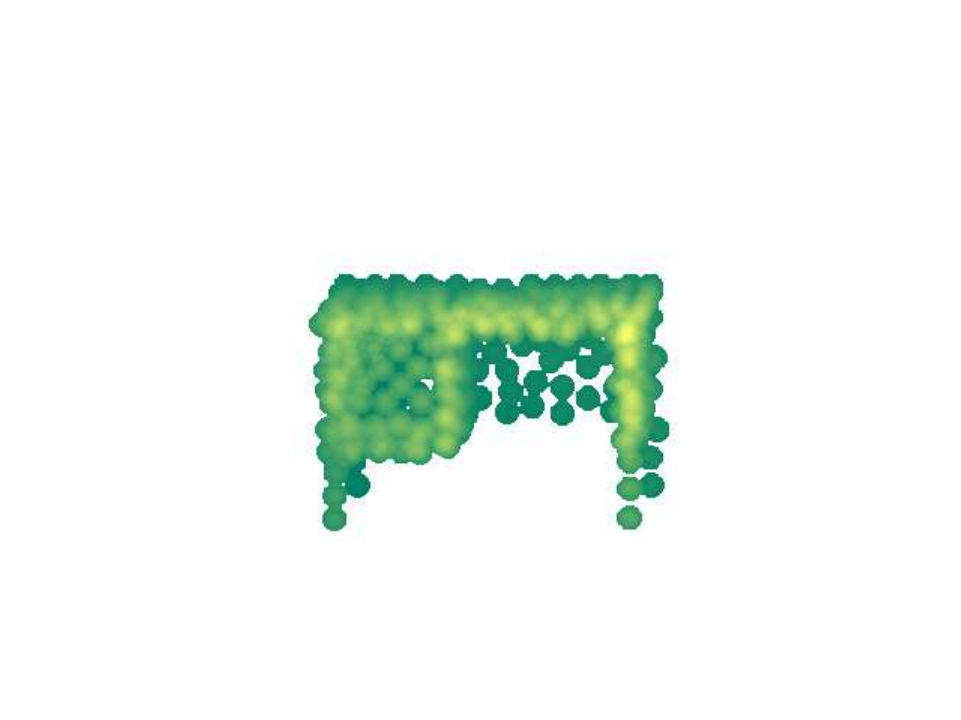} 
\includegraphics[trim=60 0 50 0,width=0.15\textwidth]{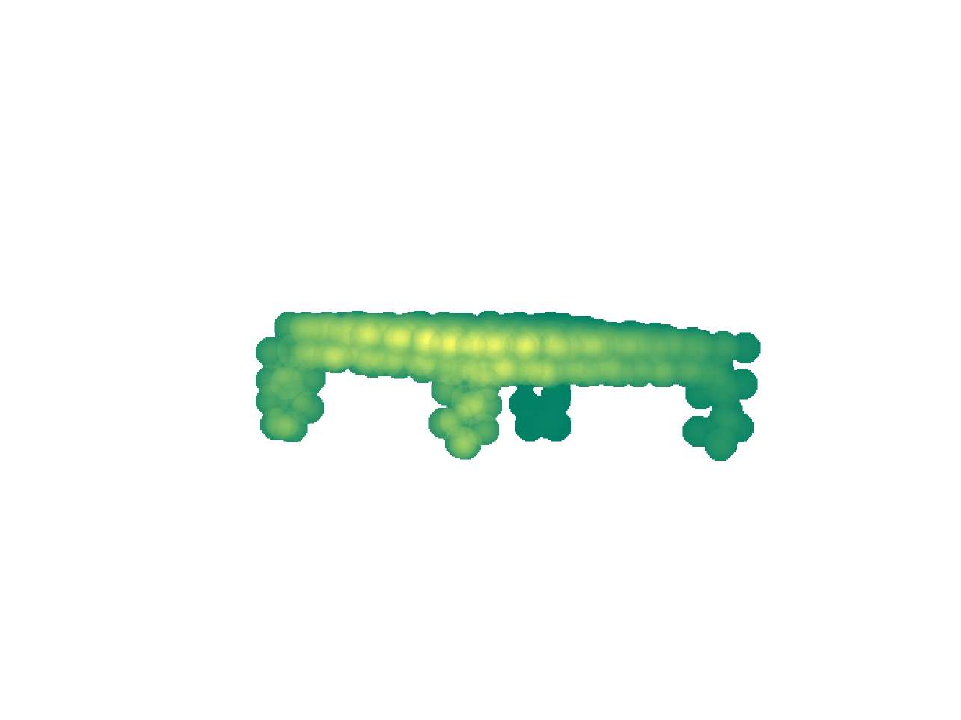}  
\includegraphics[trim=60 0 100 0,width=0.15\textwidth]{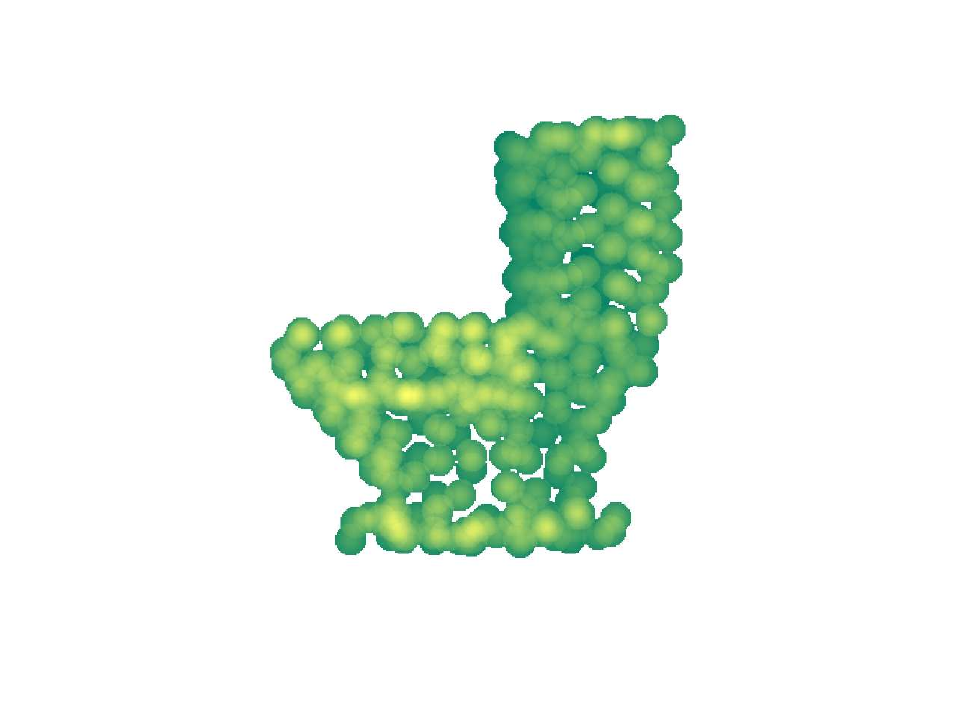}  
\includegraphics[trim=60 0 100 0,width=0.15\textwidth]{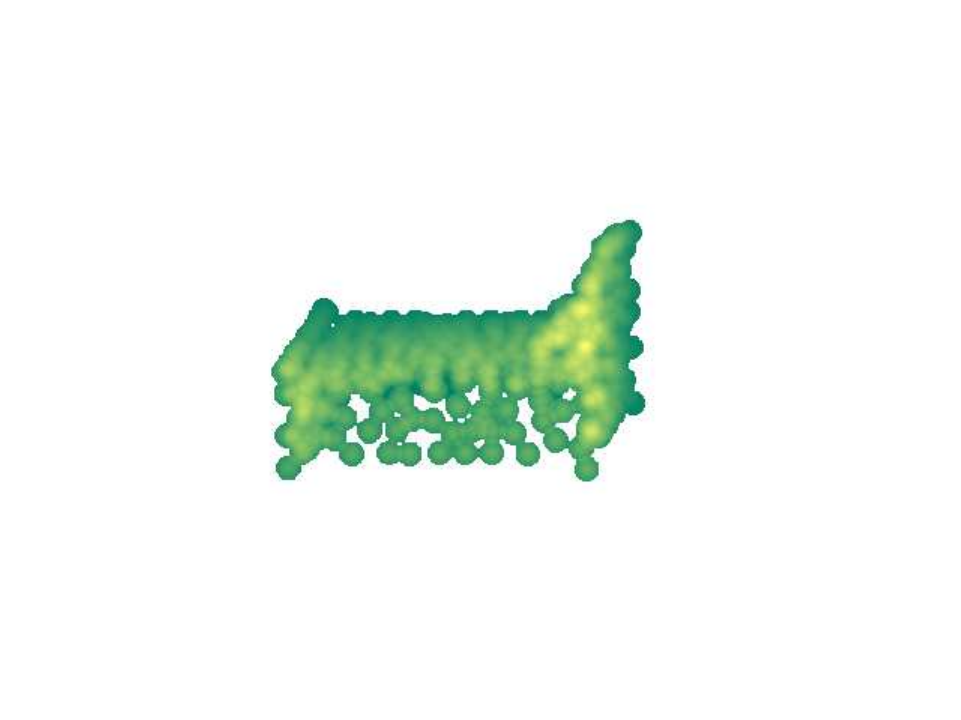} 

\caption{Point cloud models in ModelNet10 with $N=300$ sampled points in each model, corresponding to bathtub, chair, desk, table, toiler, and bed.}
\label{fig:points-10}
\end{figure*}

\section{Low Pass Filter Assumption}
In the main results, we assume that the GNN and MNN are low-pass filters. We believe this is a mild assumption. A high frequency on the graph/manifold implies that there exist signals (eigenvectors/functions) that can have large variations in adjacent entries. This naturally generates instabilities, which are more difficult to learn. It is expected certain amount of homogeneity within the neighborhood of a node, translates into low-frequency signals. It has been shown in practice that several real-world physical models lead to low-pass filters such as opinion dynamics \cite{LPF_consensus}, econometrics \cite{billio2012econometric}, and graph signal processing \cite{ramakrishna2020user}. 

Several other learning techniques rely on low-pass filters. Principal Component Analysis (PCA), can be seen as low-pass filters applied to the correlation matrix. Isomap, is another example of low pass filtering, but in this case applied to the Laplacian matrix. In all, we believe that the low pass filter assumption is not a hard assumption, given its empirical verification, and the fact that several other techniques that work in practice utilize it.

\section{Proofs}
\subsection{Proof of Theorem 1}
\label{app:proof-node}
The generalization error of GNNs under deformations is defined as 
\begin{align}
    \label{eqn:generalization-gap}
    GA_\tau = \sup_{\bbH\in \ccalH} \left| R_{\ccalM^\tau}(\bbH) -  R_\bbG(\bbH)\right|.
\end{align}

An intermediate term $\int_{\ccalM} \ell\left( {\bm\Phi}(\bbH,\ccalL, f)(x), {g}(x) \right)\text{d}\mu(x)$ is first subtracted and added as follows. 
    \begin{align}
         \left| R_{\ccalM^\tau}(\bbH) -  R_\bbG(\bbH)\right|  & = \Bigg|\frac{1}{N} \sum_{i=1}^N \ell([\bm\Phi(\bbH,\bbL, \bbx)]_i , [\bby]_i)    - \int_{\ccalM^\tau} \ell\left( {\bm\Phi}(\bbH,\ccalL_\tau, f)(x), {g}(x) \right)\text{d}\mu_\tau(x) \Bigg|\\
       & \nonumber = \Bigg|\frac{1}{N} \sum_{i=1}^N \ell([\bm\Phi(\bbH,\bbL, \bbx)]_i , [\bby]_i)   - \int_{\ccalM} \ell\left( {\bm\Phi}(\bbH,\ccalL, f)(x), {g}(x) \right)\text{d}\mu(x) \\ 
       & + \int_{\ccalM} \ell\left( {\bm\Phi}(\bbH,\ccalL, f)(x), {g}(x) \right)\text{d}\mu(x)   - \int_{\ccalM^\tau} \ell\left( {\bm\Phi}(\bbH,\ccalL_\tau, f)(x), {g}(x) \right)\text{d}\mu_\tau(x) \Bigg|
    \end{align}

This can be further decomposed with absolute value inequality as follows.
\begin{align}
   \left| R_{\ccalM^\tau}(\bbH) -  R_\bbG(\bbH)\right|  & \leq \Bigg|  \frac{1}{N} \sum_{i=1}^N \ell([\bm\Phi(\bbH,\bbL, \bbx)]_i , [\bby]_i)   - \int_{\ccalM} \ell\left( {\bm\Phi}(\bbH,\ccalL, f)(x), {g}(x) \right)\text{d}\mu(x) \Bigg| \\ 
    &\nonumber+ \Bigg| \int_{\ccalM} \ell\left( {\bm\Phi}(\bbH,\ccalL, f)(x), {g}(x) \right)\text{d}\mu(x)    - \int_{\ccalM^\tau} \ell\left( {\bm\Phi}(\bbH,\ccalL_\tau, f)(x), {g}(x) \right)\text{d}\mu_\tau(x) \Bigg|
    \label{eqn:1}
    \end{align}
The first part in \eqref{eqn:1} can be found in Theorem 1 of \cite{wangcervnino2024neurips} with the proof in Appendix D. With the conclusion written as 
\begin{align}
    & \nonumber  \Bigg|  \frac{1}{N} \sum_{i=1}^N \ell([\bm\Phi(\bbH,\bbL, \bbx)]_i , [\bby]_i)   - \int_{\ccalM} \ell\left( {\bm\Phi}(\bbH,\ccalL, f)(x), {g}(x) \right)\text{d}\mu(x) \Bigg| \\
    & \qquad \qquad \qquad \leq C_1\frac{\epsilon }{\sqrt{N}}
     + C_2 \frac{\sqrt{\log(1/\delta)}}{N}   +    C_3 \left(\frac{\log N}{N}\right)^{\frac{1}{d}},
\end{align}
with 
$C_1 = C_{\ccalM,1}\frac{\pi^2}{6}\|f\|_\ccalM+ C_{\ccalM,2}\frac{\pi^2}{6}$. $C_{\ccalM,1}$, $C_{\ccalM,2}$ and $C_3$ depend on $d$ and the volume, the probability density, the injectivity radius and sectional curvature of $\ccalM$, $C_2 = \frac{\pi^2}{6}$.  

We focus on the integration on the mismatched manifold first.
\begin{align}
      \int_{\ccalM^\tau} \ell\left( {\bm\Phi}(\bbH,\ccalL_\tau, f)(x), {g}(x) \right)\text{d}\mu_\tau(x) 
   & = \int_{\ccalM}  \ell\left( {\bm\Phi}(\bbH,\ccalL, f)(\tau(x)), {g}(\tau(x)) \right)\text{d}\mu(\tau(x))\\
   & = \int_{\ccalM}  \ell\left( {\bm\Phi}(\bbH,\ccalL', f)(x), {g}(\tau(x)) \right) J_x(\tau) \text{d}\mu(x),
\end{align}
with $\ccalL' f(x) = \ccalL f(\tau(x))$. The second part of \eqref{eqn:1} therefore can be written as
\begin{align}
   & \nonumber \Bigg| \int_{\ccalM} \ell\left( {\bm\Phi}(\bbH,\ccalL, f)(x), {g}(x) \right)\text{d}\mu(x)    - \int_{\ccalM^\tau} \ell\left( {\bm\Phi}(\bbH,\ccalL_\tau, f)(x), {g}(x) \right)\text{d}\mu_\tau(x) \Bigg| \\
    &  =  \Bigg| \int_{\ccalM} \ell\left( {\bm\Phi}(\bbH,\ccalL, f)(x), {g}(x) \right)\text{d}\mu(x)    - \int_{\ccalM} \ell\left( {\bm\Phi}(\bbH,\ccalL', f)(x), {g}(\tau(x)) \right)J_x(\tau)\text{d}\mu(x) \Bigg| 
    \end{align}
By subtracting and adding an intermediate term $\int_{\ccalM} \ell\left( {\bm\Phi}(\bbH,\ccalL, f)(x), {g}(\tau(x)) \right)\text{d}\mu(x)$ and with the absolute value inequality, we have
\begin{align}
&\nonumber \Bigg| \int_{\ccalM} \ell\left( {\bm\Phi}(\bbH,\ccalL, f)(x), {g}(x) \right)\text{d}\mu(x)     - \int_{\ccalM^\tau} \ell\left( {\bm\Phi}(\bbH,\ccalL_\tau, f)(x), {g}(x) \right)\text{d}\mu_\tau(x) \Bigg| \\
    &\nonumber \leq \Bigg| \int_{\ccalM} \ell\left( {\bm\Phi}(\bbH,\ccalL, f)(x), {g}(x) \right)\text{d}\mu(x)   - \int_{\ccalM} \ell\left( {\bm\Phi}(\bbH,\ccalL, f)(x), {g}(\tau(x)) \right)\text{d}\mu(x) \Bigg| \\
    &   + \Bigg| \int_{\ccalM} \ell\left( {\bm\Phi}(\bbH,\ccalL, f)(x), {g}(\tau(x)) \right)\text{d}\mu(x)    - \int_{\ccalM} \ell\left( {\bm\Phi}(\bbH,\ccalL', f)(x), {g}(\tau(x)) \right)J_x(\tau)\text{d}\mu(x) \Bigg|
    \label{eqn:2}
\end{align}
The first part of \eqref{eqn:2} can be bounded by the Lipschitz continuity of loss function $\ell$, which leads to
\begin{align}
    &\nonumber \Bigg| \int_{\ccalM} \ell\left( {\bm\Phi}(\bbH,\ccalL, f)(x), {g}(x) \right)\text{d}\mu(x)    - \int_{\ccalM} \ell\left( {\bm\Phi}(\bbH,\ccalL, f)(x), {g}(\tau(x)) \right)\text{d}\mu(x) \Bigg| \\
    &   \leq  \int_\ccalM  \Big| \ell\left( {\bm\Phi}(\bbH,\ccalL, f)(x), {g}(x) \right)  - \ell\left( {\bm\Phi}(\bbH,\ccalL, f)(x), {g}(\tau(x)) \right)\Big|\text{d}\mu(x)  \leq  \int_\ccalM |g(x)-g(\tau(x))|\text{d}\mu(x).
\end{align}
With the Lipschitz continuity of target function $g$ and the bounded distance between $x$ and $\tau(x)$ as $dist(x,\tau(x))\leq \gamma$, we have
\begin{align}
    \int_\ccalM |g(x)-g(\tau(x))|\text{d}\mu(x) \leq C_g \gamma.
\end{align}
Therefore, the first part of \eqref{eqn:2} can be bounded as follows.
\begin{align}
    &  \Bigg| \int_{\ccalM} \ell\left( {\bm\Phi}(\bbH,\ccalL, f)(x), {g}(x) \right)\text{d}\mu(x)   - \int_{\ccalM} \ell\left( {\bm\Phi}(\bbH,\ccalL, f)(x), {g}(\tau(x)) \right)\text{d}\mu(x) \Bigg|\leq C_g \gamma.  
\end{align}
The second part of \eqref{eqn:2} can be first bounded by decomposing the Jacobian matrix as
\begin{align}
    &\nonumber \Bigg| \int_{\ccalM} \ell\left( {\bm\Phi}(\bbH,\ccalL, f)(x), {g}(\tau(x)) \right)\text{d}\mu(x)  - \int_{\ccalM} \ell\left( {\bm\Phi}(\bbH,\ccalL', f)(x), {g}(\tau(x)) \right)J_x(\tau)\text{d}\mu(x) \Bigg|\\
    & \nonumber = \Bigg| \int_{\ccalM} \ell\left( {\bm\Phi}(\bbH,\ccalL, f)(x), {g}(\tau(x)) \right)\text{d}\mu(x)   - \int_{\ccalM} \ell\left( {\bm\Phi}(\bbH,\ccalL', f)(x), {g}(\tau(x)) \right) (1 + \Delta_\gamma)\text{d}\mu(x) \Bigg|\\
    &\nonumber \leq \Bigg|  \int_{\ccalM} \ell\left( {\bm\Phi}(\bbH,\ccalL, f)(x), {g}(\tau(x)) \right) \text{d}\mu(x)    - \int_{\ccalM} \ell\left( {\bm\Phi}(\bbH,\ccalL', f)(x), {g}(\tau(x)) \right)\text{d}\mu(x) \Bigg|\\
     & \qquad \qquad \qquad  + \|\Delta_\gamma\|_F \int_{\ccalM}\ell\left( {\bm\Phi}(\bbH,\ccalL', f)(x), {g}(\tau(x)) \right) \text{d}\mu(x)\\
      &\nonumber \leq \Bigg|  \int_{\ccalM} \ell\left( {\bm\Phi}(\bbH,\ccalL, f)(x), {g}(\tau(x)) \right) \text{d}\mu(x)    - \int_{\ccalM} \ell\left( {\bm\Phi}(\bbH,\ccalL', f)(x), {g}(\tau(x)) \right)\text{d}\mu(x) \Bigg|\\
     & \qquad \qquad \qquad + \gamma \int_{\ccalM}\ell\left( {\bm\Phi}(\bbH,\ccalL', f)(x), {g}(\tau(x)) \right) \text{d}\mu(x),
    \end{align}
where the last step is bounded by the assumption that $\|J_x(\tau) - I\|_F\leq \gamma$.
With the Lipschitz continuity and the boundedness of loss function $\ell$, we have 
\begin{align}
     &  \nonumber \Bigg| \int_{\ccalM} \ell\left( {\bm\Phi}(\bbH,\ccalL, f)(x), {g}(\tau(x)) \right)\text{d}\mu(x)   - \int_{\ccalM} \ell\left( {\bm\Phi}(\bbH,\ccalL', f)(x), {g}(\tau(x)) \right)J_x(\tau)\text{d}\mu(x) \Bigg|\\
    &\leq \int_\ccalM|{\bm\Phi}(\bbH,\ccalL, f)(x) - {\bm\Phi}(\bbH,\ccalL', f)(x)|\text{d}\mu(x)+\gamma\ell_{max}\\
    &= \|{\bm\Phi}(\bbH,\ccalL, f)- {\bm\Phi}(\bbH,\ccalL', f)\|_1+\gamma\ell_{max}\\
    & \leq \sqrt{d} \|{\bm\Phi}(\bbH,\ccalL, f)- {\bm\Phi}(\bbH,\ccalL', f)\|_2+\gamma \ell_{max},\label{eqn:part1}
\end{align}

where the last inequality comes from the fact that $\|f\|_1 \leq \sqrt{d}\|f\|_2$ for $f\in L^2(\ccalM)$. The loss function value is bounded as the output of MNN and the target function are both bounded.
To study the difference of the MNN outputs, we import the stability property of manifold neural networks in \cite{wang2024stability}. Specifically, we utilize Theorem 1, which interprets the manifold deformation as perturbations of Laplacian operator.
\begin{theorem}[Theorem 1 in \cite{wang2024stability}] \label{thm:perturb}
{Let $\ccalL$ be the LB operator of manifold $\ccalM$.
Let $\tau:\ccalM\rightarrow \ccalM$ be a manifold perturbation such that $\text{dist}(x,\tau(x))\leq \gamma$ and $\|J_x(\tau)- I\|_F\leq \gamma$ for all $x\in \ccalM$. If the gradient field is smooth, it holds that}
\begin{equation}
    \label{eqn:perturb-operator}
    \ccalL-\ccalL' = \bbE \ccalL + \ccalA ,
\end{equation}
where $\bbE$ and $\ccalA$ satisfy $\|\bbE\|=O(\gamma)$ and $\|\ccalA\|_{op}= O(\gamma)$. 
\end{theorem}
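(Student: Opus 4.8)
The plan is to reduce the statement to a comparison of two second-order elliptic operators written in local coordinates and to isolate their principal parts. By compactness of $\ccalM$ I would fix a finite atlas with a subordinate partition of unity, so that it suffices to prove the estimate chart by chart with constants uniform over $\ccalM$. In a chart with metric tensor $G=G(x)$ the weighted Laplacian has the coordinate form $\ccalL f = -\frac{1}{2\rho\sqrt{\det G}}\,\partial_i\big(\rho^2\sqrt{\det G}\,G^{ij}\partial_j f\big)$, and $\ccalL'$ is the operator associated with the pulled-back metric $G_\tau = J_x(\tau)^{\top} G\,J_x(\tau)$, which is the precise meaning of the deformation relation $\ccalL f(\tau(x)) = \ccalL' f(x)$ after the change of variables induced by $\tau$. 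Writing $J_x(\tau)=I+\Delta_x$ with $\|\Delta_x\|_F\leq\gamma$ gives $G_\tau = G + (G\Delta_x+\Delta_x^{\top}G)+\Delta_x^{\top}G\Delta_x = G + O(\gamma)$, hence $G_\tau^{ij}-G^{ij}=O(\gamma)$ and $\det G_\tau-\det G=O(\gamma)$.

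Next I would expand $\ccalL-\ccalL'$ in these coordinates and sort the resulting terms by the order of the derivative of $f$ they carry. The top-order contribution is $(G^{ij}-G_\tau^{ij})\partial_i\partial_j f$, whose coefficients are $O(\gamma)$ by the previous step; every remaining term carries at most one derivative of $f$ and is multiplied either by a difference of metric (or density) coefficients or by a derivative of such a difference. The hypothesis that the gradient field is smooth is exactly what makes the first derivatives of $\Delta_x$, and hence of $G_\tau-G$, themselves $O(\gamma)$, so that the aggregate of all lower-order terms defines a genuinely bounded operator $\ccalA$ with $\|\ccalA\|_{op}=O(\gamma)$.

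The decisive step is to absorb the unbounded principal part into a relatively bounded perturbation $\bbE\ccalL$. Since $\ccalL$ and $\ccalL'$ are uniformly elliptic second-order operators with principal symbols $G^{ij}\xi_i\xi_j$ and $G_\tau^{ij}\xi_i\xi_j$, I would define $\bbE$ on the orthogonal complement of the constants by $\bbE = \big[(G^{ij}-G_\tau^{ij})\partial_i\partial_j\big]\,\ccalL^{-1}$, so that $\bbE\ccalL$ reproduces precisely the principal part of $\ccalL-\ccalL'$. Because $\ccalL^{-1}$ has order $-2$ while the bracket has order $2$, their composition is a zeroth-order operator whose leading symbol is $(G^{ij}-G_\tau^{ij})\xi_i\xi_j / (G^{kl}\xi_k\xi_l)$; uniform ellipticity bounds this quotient by a constant times $\|G_\tau-G\| = O(\gamma)$, uniformly in $x$ and $\xi$. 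Combined with the elliptic estimate $\|f\|_{H^2}\leq C(\|\ccalL f\|_{L^2}+\|f\|_{L^2})$, this yields the $L^2$-bound $\|\bbE\|=O(\gamma)$, which is the claim.

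I expect this principal-part factorization to be the main obstacle, since $\ccalL-\ccalL'$ is itself unbounded on $L^2(\ccalM)$ and no direct operator-norm bound on it can exist; the entire content of the theorem is that the unboundedness is confined to a factor that is $O(\gamma)$ relative to $\ccalL$. Making the composition $[\cdots]\ccalL^{-1}$ rigorous would require either a pseudodifferential calculus argument or a concrete estimate in the eigenbasis $\{\bm\phi_i\}$, where one verifies that $\langle(\ccalL-\ccalL')\bm\phi_j,\bm\phi_i\rangle$ splits as $E_{ij}\lambda_j+A_{ij}$ with the matrices $E$ and $A$ of operator norm $O(\gamma)$. Throughout, compactness of $\ccalM$ and the smoothness of the gradient field are what keep every constant uniform across the manifold.
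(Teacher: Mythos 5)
Your overall architecture is the right one, and it matches the strategy of the result's actual source: note first that this paper never proves the statement at all — it is imported verbatim as Theorem 1 of \cite{wang2024stability}, so the only internal ``proof'' is a citation, and the comparison must be with that cited work. There, as in your sketch, the deformed operator is compared to $\ccalL$ through the Jacobian of $\tau$, the expansion $J_x(\tau)=I+\Delta_x$ with $\|\Delta_x\|_F\leq\gamma$, and a split of $\ccalL-\ccalL'$ into a part that is relatively bounded with respect to $\ccalL$ (giving $\bbE\ccalL$) and a remainder $\ccalA$. Your chart-by-chart reduction, the pullback $G_\tau=J_x(\tau)^{\top}G\,J_x(\tau)$, and the factorization of the principal part through $\ccalL^{-1}$ are all sound moves in that direction.

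There is, however, a genuine gap at the step where you dispose of the lower-order terms: you claim that because the remainder ``carries at most one derivative of $f$'' with $O(\gamma)$ coefficients, it ``defines a genuinely bounded operator $\ccalA$.'' That is false on $L^2(\ccalM)$: a first-order operator such as $\gamma\, v^i\partial_i$ is unbounded for every $\gamma>0$; small coefficients rescale the unboundedness but do not remove it. The fix is exactly the device you already use for the principal part — the first-order piece must also be factored through $\ccalL$ (composition with $\ccalL^{-1}$ makes it order $-1$, hence bounded, and it then contributes to $\bbE$ rather than $\ccalA$), or the entire estimate must be run on the bandlimited subspace of Definition 1, where $\|\nabla f\|_{\ccalM}\leq\sqrt{\lambda}\,\|f\|_{\ccalM}$; notably, bandlimitedness is the only regime in which this paper ever invokes the theorem. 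Two secondary points in the same vein: (i) ``the gradient field is smooth'' does not, as you assert, force $\partial\Delta_x=O(\gamma)$ — pointwise closeness of $J_x(\tau)$ to $I$ says nothing quantitative about second derivatives of $\tau$ (consider a small-amplitude, high-frequency wiggle), so this is an extra hypothesis you are adding, and your symbol-calculus step needs it too, since $L^2$-boundedness of $\bigl[(G^{ij}-G_\tau^{ij})\partial_i\partial_j\bigr]\ccalL^{-1}$ with norm $O(\gamma)$ requires $O(\gamma)$ control of symbol derivatives, not just of the principal symbol quotient; (ii) identifying $\ccalL'$ with the pulled-back-metric Laplacian acting on $f$ silently drops a composition factor, since naturality gives $(\ccalL f)\circ\tau=\ccalL_{\tau^{*}G}(f\circ\tau)$, so you must additionally show the composition operator $f\mapsto f\circ\tau$ is $O(\gamma)$-close to the identity on the relevant function class.
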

We further import Lemma 1, Lemma 2 and Lemma 3 from \cite{wang2024stability}.
\begin{proposition}[Lemma 1 and Lemma 3 in \cite{wang2024stability}] \label{thm:perturb}
The eigenvalues of LB operators $\ccalL$ and perturbed $\ccalL'=\ccalL+\bbE\ccalL+\ccalA$, with $\|\bbE\|=O(\gamma)$ and $\|\ccalA\|_{op}= O(\gamma)$ satisfy
\begin{equation}
|\lambda_i-\lambda'_i|\leq \gamma|\lambda_i|+\gamma, \text{ for all }i=1,2\hdots
\end{equation}
\end{proposition}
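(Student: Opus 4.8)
The plan is to bound the eigenvalues through the Courant--Fischer min--max characterization rather than a naive operator-norm (Weyl) estimate. A direct Weyl bound fails here: the perturbation $\ccalL'-\ccalL=\bbE\ccalL+\ccalA$ contains the term $\bbE\ccalL$, which is unbounded because $\ccalL$ is unbounded, so $\|\ccalL'-\ccalL\|_{op}=\infty$ in general. This is precisely why the target estimate is multiplicative in $|\lambda_i|$ rather than a uniform additive constant. Since $\ccalM$ is compact, both $\ccalL$ and $\ccalL'$ are self-adjoint, non-negative, and have discrete spectra with orthonormal eigenbases $\{\bm\phi_j\}$ and $\{\bm\phi'_j\}$, so the min--max principle applies: $\lambda'_i=\min_{\dim V=i}\max_{f\in V,\,\|f\|_\ccalM=1}\langle\ccalL'f,f\rangle_\ccalM$, and likewise for $\lambda_i$. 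I would use the decomposition $\ccalL'=\ccalL+\bbE\ccalL+\ccalA$ with $\|\bbE\|=O(\gamma)$ and $\|\ccalA\|_{op}=O(\gamma)$ as the only input.

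For the upper estimate $\lambda'_i-\lambda_i\le\gamma|\lambda_i|+\gamma$, I would take the trial space $V=\mathrm{span}\{\bm\phi_1,\dots,\bm\phi_i\}$ spanned by the first $i$ eigenfunctions of $\ccalL$. For any unit $f\in V$ one has simultaneously $\langle\ccalL f,f\rangle_\ccalM\le\lambda_i$ and, crucially, $\|\ccalL f\|_\ccalM\le\lambda_i$, since $\ccalL f=\sum_{j\le i}\lambda_j c_j\bm\phi_j$ with $\sum|c_j|^2=1$. Expanding $\langle\ccalL'f,f\rangle_\ccalM=\langle\ccalL f,f\rangle_\ccalM+\langle\bbE\ccalL f,f\rangle_\ccalM+\langle\ccalA f,f\rangle_\ccalM$ and applying Cauchy--Schwarz gives $|\langle\bbE\ccalL f,f\rangle_\ccalM|\le\gamma\|\ccalL f\|_\ccalM\le\gamma\lambda_i$ and $|\langle\ccalA f,f\rangle_\ccalM|\le\gamma$, so the min--max yields $\lambda'_i\le\lambda_i+\gamma\lambda_i+\gamma$.

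The lower estimate is the main obstacle, because the symmetric max--min choice forces $f$ to range over an infinite-dimensional orthogonal complement on which $\|\ccalL f\|_\ccalM$ is no longer controlled by $\lambda_i$. I would instead swap the roles of the two operators: apply the min--max for $\lambda_i$ with the trial space $V'=\mathrm{span}\{\bm\phi'_1,\dots,\bm\phi'_i\}$ of the first $i$ eigenfunctions of $\ccalL'$, on which $\|\ccalL'f\|_\ccalM\le\lambda'_i$. Writing $\ccalL=\ccalL'-\bbE\ccalL-\ccalA$ and using the triangle inequality gives the self-referential estimate $\|\ccalL f\|_\ccalM\le\|\ccalL'f\|_\ccalM+\gamma\|\ccalL f\|_\ccalM+\gamma$, which for $\gamma<1$ rearranges to $\|\ccalL f\|_\ccalM\le(\lambda'_i+\gamma)/(1-\gamma)$. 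Substituting this into $\langle\ccalL f,f\rangle_\ccalM=\langle\ccalL'f,f\rangle_\ccalM-\langle\bbE\ccalL f,f\rangle_\ccalM-\langle\ccalA f,f\rangle_\ccalM$ and bounding the perturbation terms as before yields $\lambda_i\le(\lambda'_i+\gamma)/(1-\gamma)$, i.e. $\lambda'_i\ge(1-\gamma)\lambda_i-\gamma$. Combining the two directions gives $|\lambda_i-\lambda'_i|\le\gamma|\lambda_i|+\gamma$.

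I expect the delicate points to be: (i) justifying that the min--max machinery is valid, i.e. discreteness of the spectrum and a common form domain for the two quadratic forms, which follow from compactness of $\ccalM$ together with the relative boundedness of $\bbE\ccalL+\ccalA$; and (ii) the unbounded-operator bookkeeping in the lower bound, where the rearrangement genuinely requires $\gamma<1$ and the sizes $\|\bbE\|=O(\gamma)$, $\|\ccalA\|_{op}=O(\gamma)$ to be tracked carefully. The routine constant-tracking that converts the $O(\gamma)$ estimates into the clean coefficients $\gamma|\lambda_i|+\gamma$ I would defer.
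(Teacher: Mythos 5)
Your argument is correct, but be aware that the paper itself never proves this proposition: it is imported verbatim from Lemmas 1 and 3 of \cite{wang2024stability} (the in-paper ``proof'' is the citation), so the comparison is against the cited stability paper, whose argument is likewise a Courant--Fischer min--max argument in the same spirit as yours. Your write-up supplies the two genuinely nontrivial ingredients correctly: you identify that a naive Weyl bound fails because $\bbE\ccalL$ is unbounded, and you circumvent this by choosing trial spaces of eigenfunctions on which the relevant graph norm is controlled --- for the upper bound $V=\mathrm{span}\{\bm\phi_1,\dots,\bm\phi_i\}$, where both $\langle\ccalL f,f\rangle_{\ccalM}\le\lambda_i$ and $\|\ccalL f\|_{\ccalM}\le\lambda_i$ hold, and for the lower bound the analogous space for $\ccalL'$ together with the self-referential rearrangement $(1-\gamma)\|\ccalL f\|_{\ccalM}\le\lambda'_i+\gamma$; the algebra indeed collapses exactly to $\lambda_i\le(\lambda'_i+\gamma)/(1-\gamma)$, hence $\lambda'_i\ge(1-\gamma)\lambda_i-\gamma$, and combined with $\lambda'_i\le(1+\gamma)\lambda_i+\gamma$ this gives the claim. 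Three small points to tighten: (i) the restriction $\gamma<1$ in the lower direction is harmless and should be said explicitly, since for $\gamma\ge1$ the bound $\lambda'_i\ge(1-\gamma)\lambda_i-\gamma$ is vacuous because its right-hand side is nonpositive while $\lambda'_i\ge0$; (ii) self-adjointness of $\ccalL'$ (needed for min--max) does not follow from the decomposition $\ccalL'=\ccalL+\bbE\ccalL+\ccalA$, whose correction terms need not be symmetric, but from $\ccalL'$ being the pulled-back weighted Laplacian of the deformed manifold --- this is exactly where your deferred point about a common form domain lives; (iii) under the stated hypotheses $\|\bbE\|=O(\gamma)$, $\|\ccalA\|_{op}=O(\gamma)$ your argument literally yields $|\lambda_i-\lambda'_i|\le C\gamma|\lambda_i|+C\gamma$, and the clean coefficients in the proposition implicitly assume $\|\bbE\|\le\gamma$ and $\|\ccalA\|_{op}\le\gamma$, a looseness already present in the statement as quoted. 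What your self-contained route buys over the paper's citation is precisely this visibility: the trial-space trick and the $\gamma<1$ bookkeeping are invisible in the main text, and your version makes them checkable.
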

Let $\beta = \min\{ \lambda_{M+1} -\lambda_{M},\gamma\lambda_2+\gamma \}$ with $M = \max\{ i| \lambda_i \leq \lambda\}$. We denote the set of eigenvalue indices with the difference compared to the previous eigenvalue larger than $\gamma$ as 
\begin{equation}
    I = \{1\}\cup\{i>1|\lambda_i - \lambda_{i-1} \geq \beta \},
\end{equation}
with $|I| = K$.
We define a new eigenvalue sequence with
\begin{equation}
    J_i =
    \begin{cases}
      i-1 & \text{if $I_k \leq i < I_{k+1}$, $k=1,2 \cdots ,K$}\\
      i & \text{otherwise}
    \end{cases}     
\end{equation}
We start with the difference of manifold filters by first writing out the spectral representation as 
\begin{align}
    \bbh(\ccalL)f = \sum_{i=1}^M \hat{h}(\lambda_i)\langle f,\bm\phi_i \rangle \bm\phi_i.
\end{align}
The difference can be written as 
\begin{align}
    | \bbh(\ccalL)f  - \bbh(\ccalL')f| = \left| \sum_{i=1}^M \hat{h}(\lambda_i)\langle f,\bm\phi_i \rangle \bm\phi_i -  \sum_{i=1}^M \hat{h}(\lambda_i')\langle f,\bm\phi_i' \rangle \bm\phi_i'\right|
\end{align}
We subtract and add an intermediate term $\sum_{i=1}^M \hat{h}(\lambda_{J_i})\langle f,\bm\phi_i \rangle \bm\phi_i$. With the absolute value inequality, we can decompose the difference as 
\begin{align}
     &\nonumber \| \bbh(\ccalL)f  - \bbh(\ccalL')f\| \\
    &  = \Bigg\| \sum_{i=1}^M \hat{h}(\lambda_i)\langle f,\bm\phi_i \rangle \bm\phi_i  - \sum_{i=1}^M \hat{h}(\lambda_{J_i})\langle f,\bm\phi_{i} \rangle \bm\phi_{i} + \sum_{i=1}^M \hat{h}(\lambda_{J_i})\langle f,\bm\phi_{i} \rangle \bm\phi_{i}
    -  \sum_{i=1}^M \hat{h}(\lambda_i')\langle f,\bm\phi_i' \rangle \bm\phi_i'\Bigg\|\\
    &  \label{eqn:3} \leq \left\| \sum_{i=1}^M \hat{h}(\lambda_i)\langle f,\bm\phi_i \rangle \bm\phi_i  - \sum_{i=1}^M \hat{h}(\lambda_{J_i})\langle f,\bm\phi_{i} \rangle \bm\phi_{i}\right\| + \left\|\sum_{i=1}^M \hat{h}(\lambda_{J_i})\langle f,\bm\phi_{i} \rangle \bm\phi_{i}
    -  \sum_{i=1}^M \hat{h}(\lambda_i')\langle f,\bm\phi_i' \rangle \bm\phi_i'\right\|.
\end{align}
The first term in \eqref{eqn:3} can be decomposed by subtracting and adding an intermediate term $\sum_{i=1}^{M}\hat{h}(\lambda_{J_i})\langle f,\bm\phi_{i} \rangle \bm\phi_{i}$. With absolute value inequality, we have
\begin{align}
   & \nonumber \left\| \sum_{i=1}^M \hat{h}(\lambda_i)\langle f,\bm\phi_i \rangle \bm\phi_i  - \sum_{i=1}^M \hat{h}(\lambda_{J_i})\langle f,\bm\phi_{ i} \rangle \bm\phi_{ i}\right\| \\
   &\label{eqn:part2-1}\leq \sum_{i=1}^M \left|\hat{h}(\lambda_i)-\hat{h}(\lambda_{J_i})\right|\|f\|_\ccalM\|\bm\phi_i\|^2\\
   &\leq \sum_{i=1}^M  \left|\hat{h}'(\lambda_i)\right||\lambda_i-\lambda_{J_i}| \|f\|_\ccalM\\
   & \leq \sum_{i=1}^M C_L \lambda_i^{-d-1} \beta \|f\|_\ccalM \leq \frac{\pi^2}{6}C_L \beta\|f\|_\ccalM. \label{eqn:part2}
\end{align}
As $M$ becomes very large, Weyl's law which indicates that eigenvalues of Laplace operator scales with the order  $\lambda_i \sim i^{2/d}$. Then $\sum_{i=1}^M \lambda_i^{-d-1}$ is in the order $\sum_{i=1}^M i^{-2}$, which is bounded by $\frac{\pi^2}{6}$.

The second term in \eqref{eqn:3} can be rewritten with the definition of $J_i$ and $I$ as 
\begin{align}  
 &\nonumber \left\|\sum_{i=1}^M \hat{h}(\lambda_{J_i})\langle f,\bm\phi_{i} \rangle \bm\phi_{i}
    -  \sum_{i=1}^M \hat{h}(\lambda_i')\langle f,\bm\phi_i' \rangle \bm\phi_i'\right\|\\
&   = \Bigg\|\sum_{i=1}^M \hat{h}(\lambda_{J_i})\langle f,\bm\phi_{i} \rangle \bm\phi_{i} - \sum_{i=1}^M \hat{h}(\lambda_{J_i})\langle f,\bm\phi'_{i} \rangle \bm\phi'_{i}  +\sum_{i=1}^M \hat{h}(\lambda_{J_i})\langle f,\bm\phi'_{i} \rangle \bm\phi'_{i} - \sum_{i=1}^M \hat{h}(\lambda_i')\langle f,\bm\phi_i' \rangle \bm\phi_i'\Bigg\|\\
   &\label{eqn:4}  \leq  \Bigg\|\sum_{i=1}^M \hat{h}(\lambda_{J_i})\langle f,\bm\phi_{i} \rangle \bm\phi_{i} - \sum_{i=1}^M \hat{h}(\lambda_{J_i})\langle f,\bm\phi'_{i} \rangle \bm\phi'_{i}\Bigg\|
    + \Bigg\| \sum_{i=1}^M \hat{h}(\lambda_{J_i})\langle f,\bm\phi'_{i} \rangle \bm\phi'_{i} - \sum_{i=1}^M \hat{h}(\lambda_i')\langle f,\bm\phi_i' \rangle \bm\phi_i' \Bigg\|
\end{align}
The first term in \eqref{eqn:4} can be bounded as 
\begin{align}
    &\nonumber \Bigg\|\sum_{i=1}^M \hat{h}(\lambda_{J_i})\langle f,\bm\phi_{i} \rangle \bm\phi_{i} - \sum_{i=1}^M \hat{h}(\lambda_{J_i})\langle f,\bm\phi'_{i} \rangle \bm\phi'_{i}\Bigg\|\\
    &  \leq \Bigg\|\sum_{k=1}^K \lambda_{I_k}^{-d} \sum_{I_k\leq i<I_{k+1}}\langle f,\bm\phi_{i} \rangle \bm\phi_{i}   - \sum_{k=1}^K \lambda_{I_{k+1}}^{-d}\sum_{I_k\leq i<I_{k+1}} \langle f,\bm\phi'_{i} \rangle \bm\phi'_{i}\Bigg\|\\
    &\leq \left\|\sum_{k=1}^K \lambda_{I_k}^{-d} F_k - \sum_{k=1}^K \lambda_{I_{k+1}}^{-d} F_k'\right\|\\
    & = \left\|\sum_{k=1}^K \lambda_{I_k}^{-d} \langle f,\bm\Phi_k \rangle \bm\Phi_k - \sum_{k=1}^K \lambda_{I_{k+1}}^{-d} \langle f,\bm\Phi'_k \rangle \bm\Phi'_k \right\|\\
    &\leq 2 \sum_{k=1}^K\lambda_{I_{k}}^{-d}\|f\|_\ccalM \|\bm\Phi_k-\bm\Phi_k'\|\\
    & \label{eqn:khan}\leq  \frac{\pi^3}{3}\frac{\gamma}{|\lambda_{M+1}-\lambda_M|}\|f\|_\ccalM,
\end{align}
where \eqref{eqn:khan} is derived based on Lemma 2 in \cite{wang2024stability}, which is Davis-Khan theorem. Similarly, as $K$ becomes large, consider $\lambda_i\sim i^{2/d}$, the summation can be bounded as $\frac{\pi^2}{6}$. We denote the eigenvalue gap as $\theta^{-1} = |\lambda_{M+1}-\lambda_M|^{-1}$, which is finite as $M$ is always finite even as the number of nodes grow.

The second term in \eqref{eqn:4} can be bounded similarly as steps in \eqref{eqn:part2-1} to \eqref{eqn:part2}.
\begin{align}
   \Bigg\| \sum_{i=1}^M \hat{h}(\lambda_{J_i})\langle f,\bm\phi'_{i} \rangle \bm\phi'_{i} - \sum_{i=1}^M \hat{h}(\lambda_i')\langle f,\bm\phi_i' \rangle \bm\phi_i' \Bigg\| \leq \left(\frac{\pi^2}{6} \beta +\frac{\pi^2}{3}\gamma\right)C_L\|f\|_\ccalM. \label{eqn:}
\end{align}
Combine \eqref{eqn:part1}, \eqref{eqn:part2} and \eqref{eqn:khan} together with \eqref{eqn:}, we can get the bound for the second term of \eqref{eqn:1}, which leads to $C_4 = C_g + \ell_{max}+\frac{\pi^2(\lambda_2+2)}{3}\sqrt{d}C_L\|f\|_\ccalM+ \frac{\pi^3}{3}\theta^{-1}\|f\|_\ccalM$.

This concludes that
\begin{align}
   GA_\tau \leq  C_1\frac{\epsilon }{\sqrt{N}}
     + C_2 \frac{\sqrt{\log(1/\delta)}}{N}   +    C_3 \left(\frac{\log N}{N}\right)^{\frac{1}{d}} + C_4 \gamma.
\end{align}

\subsection{Proof of Theorem 2}
Theorem 2 can be extended based on the proof process of Theorem 1. Except there are $K$ manifolds to consider parallelly, which leads to a summation of all the $K$ manifold generalization errors. The only difference lies in that the target function remains the same even under model mismatch, which leads to the constant $C_4$ independent of the target function.

\subsection{Extension to Out-of-Distribution Generalization}
The authors of \cite{shen2024zero} propose a novel method for mapping the features from different domains.
In \cite{ding2021closer}, the authors show that GNNs with augmentations perform better under domain shifts with pervasive empirical results. The authors in \cite{fan2023generalizing} provide a casual representational learning method for GNN out-of-distribution generalization. These methods and architectures to cope with domain shifts are summarized in \cite{li2022ood}. While only the authors in \cite{baranwal2021graph} provide a theoretical analysis on out-of-distribution generalization with a contextual stochastic block model as the graph generative model.

An important implication of generalization over deformation is to characterize the domain generalization capability. The conclusion of GNN generalization with model mismatch on the node level can extend to bound the domain generalization gap of GNNs by seeing the domain shift as the manifold model shift. 

Suppose the nodes on the training graph $\bbG_1$ are sampled from manifold $\ccalM_1$ (equipped with Laplace operator $\ccalL_1$) while the testing nodes are sampled from manifold $\ccalM_2$ (equipped with Laplace operator $\ccalL_2$). The out-of-distribution generalization error is denoted as 
\begin{align}
    GA_{OOD} = \sup_{\bbH\in\ccalH}|R_{\ccalM_2}(\bbH) - R_{\bbG_1}(\bbH)|,
\end{align}
with $R_{\ccalM_2}$ representing the statistical risk over the shifted manifold $\ccalM_2$ and $R_{\bbG_1}$ the empirical risk over the training graph $\bbG_1$ sampled from $\ccalM_1$. A similar conclusion can be obtained by measuring the operator difference between $\ccalL_1$ and $\ccalL_2$, which is used to measure the difference between models $\ccalM_1$ and $\ccalM_2$.
\begin{proposition}
    Suppose a neural network is equipped with filters defined in Definition 2 and normalized nonlinearites (Assumption 2). The neural network is operated on a graph $\bbG_1$ generated according to (6) from $\ccalM_1$ and a shifted manifold $\ccalM_2$ with a bandlimited (Definition 1) input manifold function. The out-of-distribution generalization of neural network trained on $\bbG_1$ to minimize a normalized Lipschitz loss function (Assumption 3) holds in probability at least $1-\delta$ that
    \begin{align}
   & \nonumber GA_{OOD}\leq C_1\frac{\epsilon }{\sqrt{N}}
     + C_2 \frac{\sqrt{\log(1/\delta)}}{N}   +    C_3 \left(\frac{\log N}{N}\right)^{\frac{1}{d}}   + C_4 \|\ccalL_1 -\ccalL_2\|_{op},
\end{align}
with $C_1$ and $C_3$ depending on the geometry of $\ccalM_1$, $C_4$ depending on the Lipschitz continuity of the target function, $C_1$ and $C_4$ depending on the continuity of the filter functions.
\end{proposition}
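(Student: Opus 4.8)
The plan is to follow the same decomposition strategy as in the proof of Theorem \ref{thm:generalization-node}, but now measuring the model shift directly through the operator-norm distance $\|\ccalL_1-\ccalL_2\|_{op}$ rather than through the geometric parameter $\gamma$. I would introduce the statistical risk over the training manifold $\ccalM_1$,
\begin{equation}
R_{\ccalM_1}(\bbH) = \int_{\ccalM_1}\ell\big(\bm\Phi(\bbH,\ccalL_1,f)(x),g(x)\big)\,\text{d}\mu_1(x),
\end{equation}
as an intermediate term and apply the triangle inequality to obtain
\begin{equation}
GA_{OOD}\leq \sup_{\bbH\in\ccalH}\big|R_{\bbG_1}(\bbH)-R_{\ccalM_1}(\bbH)\big| + \sup_{\bbH\in\ccalH}\big|R_{\ccalM_1}(\bbH)-R_{\ccalM_2}(\bbH)\big|.
\end{equation}
The first supremum is the in-distribution gap between the empirical risk on $\bbG_1$ and the statistical risk on $\ccalM_1$, which is bounded verbatim by Theorem 1 of \cite{wangcervnino2024neurips}. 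This directly supplies the first three terms of the bound, with $C_1,C_3$ inheriting their dependence on the geometry of $\ccalM_1$ and $C_1$ its dependence on $C_L$.

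The second supremum carries the operator distance. Using the normalized Lipschitz continuity of $\ell$ (Assumption \ref{ass:loss}), I would bound $|R_{\ccalM_1}(\bbH)-R_{\ccalM_2}(\bbH)|$ by the integrated difference between the two MNN outputs plus a target-function displacement term, exactly as in the passage from \eqref{eqn:2} to \eqref{eqn:part1} of the proof of Theorem \ref{thm:generalization-node}. The target-function term contributes the $C_g$ dependence of $C_4$, while the MNN-output difference reduces---via the normalized Lipschitz nonlinearity (Assumption \ref{ass:activation}) and the layered structure of the MNN---to controlling a single filter difference $\|\bbh(\ccalL_1)f-\bbh(\ccalL_2)f\|$ in terms of $\|\ccalL_1-\ccalL_2\|_{op}$.

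The central technical step is therefore this filter-stability estimate, established spectrally along the lines of \eqref{eqn:3}--\eqref{eqn:khan}. Expanding each filter in its own eigenbasis, the difference splits into an eigenvalue-mismatch part and an eigenvector-mismatch part. For the former, Weyl's perturbation inequality yields $|\lambda_i^{(1)}-\lambda_i^{(2)}|\leq \|\ccalL_1-\ccalL_2\|_{op}$, and the integral Lipschitz bound $|\hat h'(\lambda)|\leq C_L\lambda^{-d-1}$ combined with Weyl's law $\lambda_i\sim i^{2/d}$ makes the resulting sum over the bandlimited spectrum converge (bounded by $\pi^2/6$), contributing a term proportional to $C_L\|\ccalL_1-\ccalL_2\|_{op}$. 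For the latter, the Davis--Kahan theorem controls the rotation of eigenspaces by $\|\ccalL_1-\ccalL_2\|_{op}$ divided by the spectral gap $|\lambda_{M+1}-\lambda_M|$, while the low-pass decay $|\hat h(\lambda)|=\ccalO(\lambda^{-d})$ keeps the corresponding sum bounded. Assembling both contributions yields the $C_4\|\ccalL_1-\ccalL_2\|_{op}$ term, with $C_4$ scaling in $C_L$ through the filter-continuity constant and in $C_g$ through the target function.

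I expect the main obstacle to be the eigenvector-mismatch part. Unlike the eigenvalue comparison, which follows cleanly from Weyl's inequality, aligning the two eigenbases requires clustering the eigenvalues into groups separated by a spectral gap $\beta$ and applying Davis--Kahan cluster-wise, precisely via the index set $I$ and the relabeling $J_i$ constructed in the proof of Theorem \ref{thm:generalization-node}. The delicate point is that the spectral gap $|\lambda_{M+1}-\lambda_M|$ must stay bounded away from zero on the finite bandlimited window, so that $\theta^{-1}$ remains finite; this is exactly where bandlimitedness (Definition \ref{def:band}) is indispensable, since it truncates the spectrum to a fixed finite set whose gaps do not degenerate.
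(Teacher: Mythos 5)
Your proposal follows essentially the same route as the paper: the paper's proof of this proposition explicitly reuses the decomposition and spectral argument of Theorem \ref{thm:generalization-node} verbatim, replacing only the $\gamma$-based eigenvalue perturbation estimate with Weyl's inequality $|\lambda_{1,i}-\lambda_{2,i}|\leq\|\ccalL_1-\ccalL_2\|_{op}$ and the corresponding Davis--Kahan bound on eigenspace rotation in terms of the spectral gap, exactly as you propose. Your treatment of the eigenvalue clustering via the index sets $I$ and $J_i$, the role of bandlimitedness in keeping the gap term $\theta^{-1}$ finite, and the resulting dependencies of $C_1$, $C_3$, and $C_4$ all coincide with the paper's argument.
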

\begin{proof}
    The proof process is in accordance with the proof of Theorem 1, except that the differences between eigenvalues of $\ccalM_1$ and $\ccalM_2$ are now bounded as follows.
    \begin{lemma}
The eigenvalues of LB operators $\ccalL_1$ and $\ccalL_2$  satisfy
\begin{align}
    &|\lambda_{1,i}-\lambda_{2,i}|\leq \|\ccalL_1-\ccalL_2\|_{op}, \text{ for all }i=1,2\hdots
\end{align}
    \end{lemma}
    Combined with the Davis-Khan Theorem to bound the eigenspace difference as 
    \begin{lemma}
        Suppose the spectra of $\ccalL_1$ and $\ccalL_2$ are partitioned as $\theta \bigcup \Theta$ and $\omega \bigcup \Omega$ respectively, with $\theta \bigcap \Omega = \emptyset$ and $\omega \bigcap \Theta = \emptyset$. If there exists $\beta>0$ such that $\min_{x\in \theta, y\in \Omega}|x-y|\geq \beta$ and $\min_{x\in \omega, y\in \Theta}|x-y|\geq \beta$, then
        \begin{equation}
            \|E_{\ccalL_1}(\theta) - E_{\ccalL_2}(\omega)\| \leq \frac{\pi}{2} \frac{\|\ccalL_1 -\ccalL_2\|_{op}}{d}
        \end{equation}
    \end{lemma}
\end{proof}

\section{Further Reference}
\paragraph{Graphon theory}
Some works consider the generative model of graphs as graphons. These works covered the properties of GNNs, including convergence, stability, and transferability \cite{ruiz2020graphon, maskey2023transferability, keriven2020convergence}. Graphon can also be used as a pooling operator in GNNs \cite{parada2023graphon}. Although graphon is a straightforward model with simple calculations, it imposes several limitations compared to the manifold model that we prefer. Firstly, the graphon model assumes an infinite degree at every node of the generative graph \citep{lovasz2012large}, which is not the case in most scenarios in practice. 
Secondly, the graphon model gives little intuition about the geometry of the underlying model, which makes it hard to visualize a graphon. On the other hand, manifolds are more interpretable, as simple cases can be found on the sphere or other 3D shapes. Lastly, the manifold model is a more realistic assumption even for high-dimensional data \cite{lunga2013manifold}. 

\paragraph{Manifold Theory} 
We consider a manifold model to be a generative model of graphs. Related to us, some works have focused on manifold learning. In particular, some works have set out to test the manifold assumption of the data \cite{fefferman2016testing}, the manifold dimension \cite{farahmand2007manifold} and show the complexity of doing so \cite{narayanan2009sample,aamari2021statistical}. In terms of prediction and classification using manifolds and manifold data, several algorithms and methods have been proposed -- notable the \textit{Isomap} algorithm \cite{choi2004kernel,wu2004extended,yang2016multi} and other manifold learning techniques \cite{talwalkar2008large}. In all, these techniques utilize the manifold assumption to infer properties of the manifold, which differs from our goal of providing a generalization bound for GNNs generated from the manifold. 

\paragraph{Transferability of GNNs} Transferability of GNNs has been analyzed in many works by comparing the output differences of GNNs on different sizes of graphs when graphs converge to a limit model without statistical generalization analysis. In \cite{ruiz2023transferability,ruiz2020graphon, maskey2023transferability}, the transferability of GNNs on graphs generated from graphon models is analyzed by bounding the output differences of GNNs. In \cite{cervino2023learning}, the authors show how increasing the size of the graph as the GNN learns, generalizes to the large-scale graph. In \cite{levie2021transferability}, the authors analyze the transferability of GNNs on graphs generated from a general topological space while the authors in \cite{wang2023geometric} focus on graphs generated from manifolds. In \cite{le2024limits}, the authors propose a novel graphop operator as a limit model for both dense and sparse graphs with a transferability result proved. In \cite{lee2017transfer} and \cite{zhu2021transfer}, the authors study the transfer learning of GNNs with measurements of distance between graphs without a limit assumption. In \cite{he2023network}, a transferable graph transformer is proposed and proved with empirical results.

\end{document}